\documentclass[11pt]{article}

\usepackage{authblk}
\usepackage[utf8]{inputenc}
\usepackage[T1]{fontenc}
\usepackage{amsmath,amssymb,amsfonts}
\usepackage{bm}
\usepackage{hyperref}
\usepackage{graphicx}
\usepackage{booktabs}
\usepackage{multirow}
\usepackage{subcaption}
\usepackage{tabularx}
\usepackage{adjustbox}
\usepackage{makecell}
\usepackage{algorithm}
\usepackage{algpseudocode}
\usepackage{geometry}
\usepackage{amsthm}
\newtheorem{theorem}{Theorem}

\newtheorem{proposition}{Proposition}[section]
\theoremstyle{remark}
\newtheorem{remark}{Remark}[section]

\usepackage{natbib}
\title{Adaptive Kernel Selection for Kernelized Diffusion Maps}
\author[1]{Othmane~Aboussaad$^{\ast}$}
\author[2]{Adam~Miraoui$^{\ast}$}
\author[3,4,5]{Boumediene~Hamzi}
\author[3]{Houman~Owhadi}

\affil[1]{CY Cergy Paris University}
\affil[2]{Institut Polytechnique de Paris}
\affil[3]{California Institute of Technology}
\affil[4]{The Alan Turing Institute}
\affil[5]{Imperial College London}
\date{}

\newcommand{\Hk}{\mathcal{H}}
\newcommand{\Id}{\mathrm{Id}}
\newcommand{\R}{\mathbb{R}}
\newcommand{\E}{\mathbb{E}}

\begin{document}
\maketitle
\footnotetext[0]{$^{\ast}$This work was carried out while the author was a visiting student at Imperial College London.}

% ============================================================
% Abstract 
% ============================================================
\begin{abstract}
The selection of an appropriate kernel is a recurring challenge in kernel-based spectral methods.
In \emph{Kernelized Diffusion Maps} (KDM), the kernel determines the accuracy of the RKHS estimator of a diffusion-type operator, hence the quality and stability of the recovered eigenfunctions.
This paper introduces two complementary approaches to adaptive kernel selection for KDM.
First, we develop a \emph{variational outer loop} that learns continuous kernel parameters (bandwidths and mixture weights) by automatic differentiation through the Cholesky-reduced KDM eigenproblem, using a combined eigenvalue-maximization, subspace-orthonormality, and RKHS penalty objective.
Second, we propose an \emph{unsupervised cross-validation pipeline} that selects among multiple kernel families (Gaussian, Mat\'ern, Rational Quadratic, Laplacian) and bandwidths using an eigenvalue-sum score, combined with random Fourier features for scalability.
Both approaches share a common theoretical foundation: we prove Lipschitz dependence of the KDM operators on kernel weights, spectral projector continuity under a gap condition, a residual-control theorem certifying proximity to the target eigenspace, and exponential consistency of the CV selector over a finite kernel dictionary under concentration and margin assumptions.
Experiments on Ornstein--Uhlenbeck processes (2D, 3D, and $d=10$--$20$), double-well potentials, and circle manifolds show that the CV+RFF pipeline achieves near-perfect recovery on OU problems (up to $2\times$ error reduction over uniform baselines) by automatically selecting non-Gaussian kernels. A Mat\'ern-3/2 variational RFF method with bounded anisotropic parameterization further matches or exceeds CV+RFF on the OU2D benchmarks ($0.991$ vs $0.977$ on $\alpha_y=4$; $0.960$ vs $0.939$ on $\alpha_y=16$), demonstrating that CV and variational refinement are complementary when given equal inner-loop capacity.
Beyond positive results, we identify and analyze structural failure modes of unconstrained gradient-based bandwidth optimization---including $\sigma\to 0$ collapse under orthonormality penalties, $\sigma\to\infty$ trivialization under eigenvalue-only losses, and the incapacity of Rayleigh-quadratic regularizers to prevent either---explaining why the two approaches are not symmetric and motivating a hybrid CV-then-local-refinement strategy.
\end{abstract}

% ============================================================
\section{Introduction}
\label{sec:intro}

Spectral methods extract low-dimensional structure from high-dimensional data by computing eigenpairs of operators constructed from pairwise relationships among samples.
Given $N$ data points, one builds a matrix encoding local similarities and analyzes its leading eigenvalues and eigenvectors; the resulting coordinates reveal clusters, manifolds, and slow dynamical modes that are invisible in the ambient representation.
The approach is nonparametric, requires no explicit generative model, and---when the operator is well chosen---provably converges to intrinsic geometric or dynamical quantities as $N\to\infty$.

\emph{Diffusion maps}~\citep{COIFMAN20065} are the most widely used instance of this program.
Starting from a kernel $k_\varepsilon$ and i.i.d.\ samples $\{x_i\}_{i=1}^N$, one constructs a Markov operator whose leading eigenfunctions approximate those of the Laplace--Beltrami operator on the data manifold, or, when the data come from a stochastic differential equation, of the infinitesimal generator.
The resulting diffusion coordinates have become a standard tool for manifold learning, clustering, and the identification of slow reaction coordinates in molecular dynamics and other applications.
Yet the quality of the embedding depends critically on the \emph{choice of kernel}: the kernel family, its bandwidth, and any normalizations jointly determine the operator spectrum and, consequently, the accuracy and stability of the recovered eigenfunctions.

Reproducing kernel Hilbert spaces (RKHS) provide a natural framework for making this dependence precise.
An RKHS $\mathcal{H}_k$ associated with a positive-definite kernel $k$ comes equipped with a norm that encodes smoothness, and operator approximations built in $\mathcal{H}_k$ inherit non-asymptotic error bounds that depend on the regularity of the target eigenfunctions relative to the RKHS rather than on the ambient dimension.
This perspective has been exploited broadly in the analysis of dynamical systems---for learning attractors, Lyapunov functions, Koopman operators, and transfer operators~\citep{CuckerandSmale, BH4, yk1, bhcm11, bhcm1, lyap_bh, bh_sparse_kfs, BHPhysicaD, hamzi2019kernel, bh2020b, SKO3, ALEXANDER2020132520, bh12, bh17, hb17, mmd_kernels_bh, hamzipaillet, jalalian_deds_2023, lee2024note, hamzi2025kernel_lions, bittracher2021dimensionality, bh_kfs_p6}---and motivates a function-space approach to spectral estimation.

\emph{Kernelized Diffusion Maps} (KDM)~\citep{pmlr-v195-pillaud-vivien23a} realize this idea concretely.
Rather than building a graph Laplacian from pointwise kernel evaluations, KDM constructs two RKHS operators---a covariance operator $\Sigma$ and a gradient operator $\mathcal{L}$ assembled from kernel derivatives---and recovers eigenfunctions of the diffusion operator by solving a generalized eigenproblem in $\mathcal{H}_k$.
The resulting estimator achieves dimension-free statistical rates that adapt to the regularity of the target, and admits scalable approximations through Nystr\"om subsampling or random Fourier features~\citep{RR08}.
Complementary results in operator learning confirm that replacing graph Laplacians with function-space estimators improves both statistical and computational performance of spectral algorithms~\citep{pmlr-v238-cabannes24a}, and diffusion-maps kernels have been used as data-adaptive inputs to kernel ridge regression for learning solution operators of dynamical systems~\citep{song2025learningsolutionoperatordynamical}.

Despite these advances, KDM inherits the same kernel-selection problem as classical diffusion maps: a single kernel chosen \emph{a priori} may be poorly matched to the target eigenfunctions, especially under noise or when the eigenfunctions encode \emph{dynamical} rather than purely geometric structure.
A parallel line of work on \emph{generator-based objectives} reinforces this point.
For reversible diffusions, slow modes correspond to eigenfunctions of the infinitesimal generator or, equivalently, the Markov semigroup; learning generator-related quantities from data has been approached through energy-form risks~\citep{NEURIPS2024_f930c6e1}, transform-based methods~\citep{kostic2025laplacetransformbasedlowcomplexity}, and debiasing techniques for data collected under non-equilibrium conditions~\citep{NEURIPS2024_89edef87}.
These results show that physics-informed constraints can guide eigenfunction learning beyond purely geometric criteria---but they do not resolve the kernel-selection question itself.
Multiple kernel learning (MKL)~\citep{gonen2011multiple} offers a principled route: instead of committing to one kernel, one optimizes over convex combinations drawn from a dictionary.
Kernel flows~\citep{owhadi2019kernel} pursue a related idea by selecting kernels that minimize a loss tied to the statistical task at hand, and recent work constructs task-adapted kernels for Koopman eigenfunctions via Lions-type variational principles and multiple-kernel learning~\citep{hamzi2025transport,hamzi2025stochastic}.
Adapting these strategies to the KDM setting---where the task is spectral estimation of a diffusion-type operator---is the subject of this paper.

\paragraph{Our approach.}
We address adaptive kernel selection for KDM through two complementary strategies that share a common operator-theoretic foundation.
The first is a \emph{variational multiple-kernel learning} framework: given a dictionary of positive-definite kernels $\{k_\ell\}_{\ell=1}^L$, we optimize a convex mixture
\begin{equation}
k_\beta(x,y) \;=\; \sum_{\ell=1}^L \beta_\ell\, k_\ell(x,y), \qquad \beta \in \Delta_L,
\label{eq:kernel-mixture}
\end{equation}
by differentiating through the Cholesky-reduced KDM eigenproblem, using an outer objective that blends eigenvalue maximization, RKHS regularization, and optional generator-informed terms.
The kernel $k_\beta$ induces an RKHS $\Hk_{k_\beta}$ and a pair of empirical operators $(\widehat\Sigma_\beta,\widehat{\mathcal{L}}_\beta)$ from which eigenpairs are computed; the variational framework makes this entire pipeline differentiable and computationally explicit.
The second strategy is an \emph{unsupervised cross-validation pipeline} that searches over multiple kernel families (Gaussian, Mat\'ern, Rational Quadratic, Laplacian) and bandwidths using a held-out eigenvalue-sum score, combined with random Fourier features for scalability.
The two approaches share the same theoretical guarantees developed in Section~\ref{sec:theory}.

\paragraph{Contributions.} Our main contributions are as follows.
\begin{itemize}
    \item \textbf{Adaptive kernel selection for KDM.}
    We study two complementary strategies for adapting kernels in kernelized diffusion maps:
    a differentiable variational outer loop for continuous kernel refinement, and a cross-validated
    multi-family kernel-selection pipeline combined with random Fourier features for scalable
    practical model selection.

    \item \textbf{Variational KDM framework.}
    We formulate a variational outer-loop objective for KDM based on eigenvalue-driven spectral
    optimization, subspace-orthonormality stabilization, RKHS regularization, and optional
    generator-informed residual terms. The resulting framework makes end-to-end kernel adaptation
    in KDM differentiable and computationally explicit.

    \item \textbf{Theory.}
    We prove well-posedness of convex kernel mixtures, Lipschitz dependence of the reduced KDM
    operators on kernel weights, spectral projector continuity under a gap condition, a
    residual-control theorem linking generator residuals to proximity to the target eigenspace,
    and exponential consistency of the CV selector over a finite kernel dictionary
    under concentration and margin assumptions.

    \item \textbf{Empirical complementarity.}
    Across synthetic manifolds and stochastic dynamical systems, we show that the CV+RFF pipeline
    achieves the strongest practical performance on OU-type problems by automatically selecting
    advantageous non-Gaussian kernels. A Mat\'ern-3/2 variational RFF with bounded anisotropic
    parameterization then further improves on CV+RFF for the 2D OU benchmarks, demonstrating that
    the two approaches are genuinely complementary when given equal inner-loop capacity.

    \item \textbf{Diagnostic contribution.}
    Beyond positive results, we identify and systematically analyze structural failure modes of
    unconstrained gradient-based bandwidth optimization: $\sigma\to 0$ collapse under
    orthonormality penalties, $\sigma\to\infty$ trivialization under eigenvalue-only losses,
    and the incapacity of Rayleigh-quadratic regularizers to prevent either. We show that these
    pathologies are structural to Rayleigh-based objectives, not tuning issues, which explains
    why CV-based selection is more robust than gradient-based optimization on bandwidth choice.
\end{itemize}

\paragraph{Organization.} The paper is organized as follows:
Section~\ref{sec:background} reviews diffusion maps, KDM, scalable approximations, MKL and diffusion generators.
Section~\ref{sec:method} presents the variational outer loop and the CV+RFF pipeline.
Section~\ref{sec:theory} establishes theoretical properties (well-posedness, stability, residual control, identifiability).
Section~\ref{sec:experiments} reports experiments, ablations and high-dimensional scaling.
Section~\ref{sec:discussion} discusses trade-offs, limitations and future directions.

% ============================================================
\section{Background}
\label{sec:background}

This section reviews diffusion maps, the RKHS-based viewpoint of KDM and scalable approximations, multiple kernel learning (MKL) and infinitesimal generators.

% ------------------------------------------------------------
\subsection{Diffusion maps and kernel integral operators}
\label{subsec:dm}

Let $\mu$ be a probability measure supported on a manifold $\mathcal{M}\subset\mathbb{R}^d$, and let $X=\{x_i\}_{i=1}^N\sim \mu$ be i.i.d.\ samples.
Given a symmetric kernel $k_\varepsilon:\mathcal{M}\times\mathcal{M}\to\mathbb{R}_+$, diffusion maps construct a Markov operator by normalizing a kernel integral operator.
Define the population degree function
\begin{equation}
q_\varepsilon(x) \;=\; \int_{\mathcal{M}} k_\varepsilon(x,y)\, d\mu(y),
\end{equation}
and the $\alpha$-normalized kernel
\begin{equation}
k_\varepsilon^{(\alpha)}(x,y)
\;=\;
\frac{k_\varepsilon(x,y)}{q_\varepsilon(x)^\alpha\, q_\varepsilon(y)^\alpha},
\qquad \alpha\in[0,1].
\label{eq:alpha-normalization}
\end{equation}
Let $d_\varepsilon^{(\alpha)}(x)=\int k_\varepsilon^{(\alpha)}(x,y)\,d\mu(y)$ and define the Markov operator
\begin{equation}
(P_\varepsilon f)(x)
\;=\;
\int_{\mathcal{M}}
\frac{k_\varepsilon^{(\alpha)}(x,y)}{d_\varepsilon^{(\alpha)}(x)}\,
f(y)\, d\mu(y).
\label{eq:markov-operator}
\end{equation}
The diffusion maps embedding is obtained from leading eigenpairs of $P_\varepsilon$ or from a self-adjoint conjugate.
Under suitable regimes where $\varepsilon\to0$, the operator approximates the Laplace--Beltrami operator on $\mathcal{M}$.
A rigorous approach oriented on PDEs, including the case of manifolds with boundary and applications to PDEs on point clouds, is developed in \citet{VAUGHN2024101593}.

% ------------------------------------------------------------
\subsection{Kernelized Diffusion Maps and scalable spectral approximations}
\label{subsec:kdm}
Graph Laplacian estimators rely on local averaging and are statistically fragile in high ambient dimension.
\citet{pmlr-v195-pillaud-vivien23a} propose an RKHS estimator built from two operators: a covariance operator and an RKHS ``Dirichlet/gradient'' operator.
This paves the way to non-asymptotic rates and scalable approximations through Nystr\"om subsampling or random features.

\paragraph{RKHS operators.}
Let $k$ be a $C^2$ positive definite kernel on $\R^d$ with RKHS $\Hk_k$.
Denote by $K_x(\cdot)=k(x,\cdot)\in\Hk_k$ and by $\partial_j K_x(\cdot)=\partial_{x_j}k(x,\cdot)\in\Hk_k$.
Define the covariance operator
\begin{equation}
\Sigma
\;:=\;
\E_{X\sim\mu}\big[ K_X \otimes K_X \big],
\label{eq:pv_sigma_pop}
\end{equation}
and the gradient operator
\begin{equation}
\mathcal{L}
\;:=\;
\E_{X\sim\mu}\Big[\sum_{j=1}^d \partial_j K_X \otimes \partial_j K_X \Big].
\label{eq:pv_L_pop}
\end{equation}
For $\lambda>0$, define the regularized operator
\begin{equation}
\mathcal{L}_\lambda \;:=\; \mathcal{L}+\lambda \Id.
\label{eq:pv_Llam_pop}
\end{equation}
The KDM operator is the self-adjoint compact operator
\begin{equation}
\mathcal{T}_\lambda
\;:=\;
\mathcal{L}_\lambda^{-1/2}\,\Sigma\,\mathcal{L}_\lambda^{-1/2}.
\label{eq:pv_T_pop}
\end{equation}
Eigenpairs of $\mathcal{T}_\lambda$ are equivalently characterized by a generalized eigenproblem:
find $(\mu,\psi)\in\R\times\Hk_k$ with $\psi\neq 0$ such that
\begin{equation}
\Sigma \psi \;=\; \mu\, \mathcal{L}_\lambda \psi.
\label{eq:pv_gen_eig_pop}
\end{equation}
This construction avoids graph local averaging and can circumvent the curse of dimensionality under appropriate regularity \citep{pmlr-v195-pillaud-vivien23a}.

\paragraph{Empirical estimators.}
Given samples $X=\{x_i\}_{i=1}^N\sim\mu$, define
\begin{equation}
\widehat{\Sigma}
\;:=\;
\frac1N\sum_{i=1}^N K_{x_i}\otimes K_{x_i},
\qquad
\widehat{\mathcal{L}}
\;:=\;
\frac1N\sum_{i=1}^N\sum_{j=1}^d \partial_j K_{x_i}\otimes \partial_j K_{x_i},
\qquad
\widehat{\mathcal{L}}_\lambda=\widehat{\mathcal{L}}+\lambda \Id,
\label{eq:pv_emp_ops}
\end{equation}
and the empirical KDM operator
\begin{equation}
\widehat{\mathcal{T}}_\lambda
\;:=\;
\widehat{\mathcal{L}}_\lambda^{-1/2}\,\widehat{\Sigma}\,\widehat{\mathcal{L}}_\lambda^{-1/2}.
\label{eq:pv_emp_T}
\end{equation}
In practice, one computes leading eigenpairs through a finite-dimensional approximation by using either Nystr\"om subsampling or random features \citep{pmlr-v195-pillaud-vivien23a}.

\paragraph{Nystr\"om viewpoint.}
KDM  is \emph{not} the eigen-decomposition of a covariance surrogate $W^{-1/2}C^\top C W^{-1/2}$ alone.
Instead, the finite-dimensional inner loop builds matrices that approximate the pair $(\widehat{\Sigma},\widehat{\mathcal{L}})$ (the latter requiring kernel derivatives) and solves a generalized eigenproblem that discretizes \eqref{eq:pv_gen_eig_pop}.
We detail the Nystr\"om construction in Section~\ref{subsec:nystrom} and Appendix~\ref{app:spectral}.

\paragraph{Galerkin viewpoint.}
Beyond Nystr\"om, restricting the spectral problem to a low-dimensional test-function space (Galerkin) can be statistically and computationally advantageous compared to graph-based approaches \citep{pmlr-v238-cabannes24a}.

% ------------------------------------------------------------
\subsection{Multiple kernel learning (MKL) and RKHS of a convex mixture}
\label{subsec:mkl}

We consider a dictionary of positive definite kernels $\{k_\ell\}_{\ell=1}^L$ and a convex mixture
\begin{equation}
k_\beta(x,y)\;=\;\sum_{\ell=1}^L \beta_\ell\, k_\ell(x,y),
\qquad
\beta\in\Delta_L
:=\Big\{\beta\in\mathbb{R}^L_{\ge 0}:\sum_{\ell=1}^L \beta_\ell=1\Big\}.
\label{eq:kernel-mixture-bg}
\end{equation}
Convex combinations preserve positive definiteness, hence $k_\beta$ defines an RKHS $\Hk_{k_\beta}$.
A useful characterization of the RKHS norm is the infimal convolution form:
\begin{equation}
\|f\|^2_{\mathcal{H}_{k_\beta}}
\;=\;
\inf_{\substack{f=\sum_{\ell=1}^L f_\ell\\ f_\ell\in \mathcal{H}_{k_\ell}}}
\ \sum_{\ell=1}^L \frac{\|f_\ell\|^2_{\mathcal{H}_{k_\ell}}}{\beta_\ell}.
\label{eq:rkhs-mixture-norm}
\end{equation}
This identity makes explicit how $\beta$ controls the geometry of functions and motivates RKHS penalties when learning $\beta$.

From a practical standpoint, the importance of learning kernels in dynamical settings is also highlighted by the \emph{kernel flows} criterion of~\citet{owhadi2019kernel}, and its application to learning kernels for dynamical systems in a sequence of papers including~\citet{hamzi2021learning,darcy2025learning}.

% ------------------------------------------------------------
\subsection{Infinitesimal generators, semigroups and eigenfunctions}
\label{subsec:generators}

Consider a diffusion process $(X_t)_{t\ge 0}$ on $\mathcal{M}$ satisfying
\begin{equation}
dX_t = b(X_t)\,dt + \sigma(X_t)\, dW_t,
\end{equation}
with drift $b$ and diffusion matrix $a=\sigma\sigma^\top$.
The infinitesimal generator $\mathcal{G}$ acts on smooth test functions as
\begin{equation}
(\mathcal{G}f)(x)
\;=\;
b(x)^\top \nabla f(x)
+\frac12 \mathrm{tr}\!\big(a(x)\nabla^2 f(x)\big),
\label{eq:generator}
\end{equation}
and generates the Markov semigroup $(P_t)_{t\ge 0}$ via $P_t=e^{t\mathcal{G}}$.
Slow collective variables and reaction coordinates are linked to leading eigenfunctions of $\mathcal{G}$ (or of $P_t$), motivating objectives that encourage eigen-relations
\begin{equation}
\mathcal{G}\phi \;=\; -\lambda \phi,
\label{eq:gen-eigen}
\end{equation}
either pointwise via a residual or in an energy form.
We reserve the symbol $\mathcal{L}$ for the RKHS Dirichlet operator in \eqref{eq:pv_L_pop} to avoid notational clashes.

Another recent work provides learning objectives and guarantees for estimating quantities related to the generator from data. Indeed, \cite{NEURIPS2024_f930c6e1} develops a learning framework for $\mathcal{L}$ using Dirichlet form risks with finite-sample guarantees, while \cite{kostic2025laplacetransformbasedlowcomplexity} proposes low-complexity learning of continuous Markov semigroups via Laplace transform. Finally, diffusion-map-based constructions can be leveraged in dynamical and generative contexts, therefore illustrating the broader relevance of diffusion-operator spectra \citep{li2023diffusion}.

% ============================================================
\section{Method}
\label{sec:method}

We present two complementary approaches to adaptive kernel selection for KDM.
The first is a \emph{variational multiple-kernel learning} (VMKL) method in which kernel
parameters are learned by differentiating through a Nystr\"om approximation of the KDM
generalized eigenproblem. The second, used primarily in the experimental section, is a
\emph{cross-validated multi-family kernel-selection pipeline} combined with random Fourier
features (RFF). The variational method provides the differentiable formulation and theoretical
backbone of continuous kernel adaptation, while the CV+RFF pipeline provides a broader and more
scalable practical search over kernel families and bandwidths. We describe the variational method
in detail below and use the CV+RFF pipeline in Section~\ref{sec:experiments} as a complementary adaptive baseline.

% ------------------------------------------------------------
\subsection{Problem setup and notation}
\label{subsec:setup}

Let $X=\{x_i\}_{i=1}^N\subset\mathbb{R}^d$ denote the dataset and let $Z=\{z_j\}_{j=1}^p$ denote Nystr\"om landmarks with $p\ll N$.
We consider a dictionary of $C^2$ positive definite kernels $\{k_\ell\}_{\ell=1}^L$ and learn mixture weights $\beta\in\Delta_L$.
For a given $\beta$, the mixture kernel is
\begin{equation}
k_\beta(x,y) \;=\; \sum_{\ell=1}^L \beta_\ell\, k_\ell(x,y).
\label{eq:kbeta}
\end{equation}

\paragraph{Target eigenfunctions.}
For each $\beta$ and regularization $\lambda>0$, KDMs define the empirical operator
\[
\widehat{\mathcal{T}}_{\beta,\lambda}
=
\widehat{\mathcal{L}}_{\beta,\lambda}^{-1/2}\,\widehat{\Sigma}_\beta\,\widehat{\mathcal{L}}_{\beta,\lambda}^{-1/2},
\qquad
\widehat{\mathcal{L}}_{\beta,\lambda}=\widehat{\mathcal{L}}_\beta+\lambda \Id,
\]
whose leading eigenfunctions yield the KDM embedding.
Throughout, $\lambda$ denotes the RKHS regularization parameter, $\mu_k$ denotes the generalized eigenvalues of the KDM eigenproblem, and $\widehat{\lambda}_k$ denotes Rayleigh-quotient estimates of the generator eigenvalues used in the optional PDE residual term.
We compute approximate eigenfunctions $\{\widehat{\phi}_k(\beta)\}_{k=1}^r$ on samples via Nystr\"om (Section~\ref{subsec:nystrom}).

\paragraph{Dynamical information.}
When a generator $\mathcal{G}$ (or a discrete approximation $G_h$) is available, we may include an outer-loop loss that encourages $\widehat{\phi}_k$ to satisfy $\mathcal{G}\widehat{\phi}_k\approx -\widehat{\lambda}_k\widehat{\phi}_k$ in $L^2(\mu)$.
This term is removed when treating purely geometric settings.

% ------------------------------------------------------------
\subsection{Rayleigh-Ritz interpretation of the KDM inner loop}
\label{subsec:rayleigh-ritz}

The KDM generalized eigenproblem admits a classical variational interpretation that clarifies the role of each component.
For fixed $\beta$ and $\lambda$, the population operator $\mathcal{T}_{\beta,\lambda}$ has eigenfunctions characterized by the min-max Rayleigh principle:
\begin{equation}
\label{eq:rayleigh-minmax}
\mu_j
=
\max_{\substack{V\subset\mathcal{H}_{k_\beta}\\ \dim V = j}}
\min_{\phi\in V\setminus\{0\}}
\frac{\langle \phi, \Sigma_\beta\, \phi\rangle}{\langle \phi, \mathcal{L}_{\beta,\lambda}\,\phi\rangle}.
\end{equation}
The empirical KDM inner loop is precisely the \emph{Rayleigh-Ritz} (Galerkin) approximation of~\eqref{eq:rayleigh-minmax} in the finite-dimensional trial subspace $V_N = \mathrm{span}\{k_\beta(\cdot,x_1),\ldots,k_\beta(\cdot,x_N)\}$ (or $V_p = \mathrm{span}\{k_\beta(\cdot,z_j)\}_{j=1}^p$ in the Nystr\"om case).
By standard Rayleigh-Ritz theory, when $\mathcal{L}_{\beta,\lambda}$ is positive definite, the Ritz values $\widehat\mu_1\ge\cdots\ge\widehat\mu_r$ approximate the corresponding population eigenvalues from above (interlacing), and converge as the trial subspace fills $\mathcal{H}_{k_\beta}$.
This connection has three practical consequences:
(i) the eigenvalue-sum CV score~\eqref{eq:emp-cv} is exactly the sum of top-$r$ Rayleigh quotients evaluated at the Ritz vectors, providing the connection to the Rayleigh CV score of Section~\ref{subsec:rayleigh-cv};
(ii) the orthogonality constraint in our variational outer loop is the Galerkin orthogonality condition, not an ad-hoc penalty; 
(iii) the Lipschitz bounds of Proposition~\ref{prop:lipschitz-operators} translate, via standard Rayleigh-Ritz perturbation theory, into stability bounds for the eigenvalues with respect to $\beta$, underpinning the existence of a minimizer in Proposition~\ref{prop:existence-minimizer}.
Throughout the paper, we use ``Rayleigh quotient'' and ``generalized eigenvalue'' interchangeably when referring to the top-$r$ objective.

% ------------------------------------------------------------
\subsection{Mixture-of-kernels parametrization (VMKL)}
\label{subsec:vmkl}

We parameterize the mixture weights with a softmax map to enforce simplex constraints:
\begin{equation}
\beta_\ell(u) \;=\; \frac{\exp(u_\ell)}{\sum_{m=1}^L \exp(u_m)}, \qquad \ell=1,\dots,L,
\label{eq:softmax}
\end{equation}
where $u\in\mathbb{R}^L$ are unconstrained parameters.
We can also prevent collapse by a small floor $\beta \leftarrow (1-\tau)\beta + \tau\mathbf{1}/L$.

% ------------------------------------------------------------
\subsection{Inner loop: Nystr\"om approximation of KDM}
\label{subsec:nystrom}

For each base kernel $k_\ell$, define Nystr\"om kernel matrices
\begin{equation}
(C_\ell)_{ij} = k_\ell(x_i,z_j), \qquad (W_\ell)_{ij}=k_\ell(z_i,z_j),
\label{eq:CW-ell}
\end{equation}
with $C_\ell\in\mathbb{R}^{N\times p}$, $W_\ell\in\mathbb{R}^{p\times p}$.

\paragraph{Kernel-derivative Nystr\"om matrices.}
KDM requires kernel derivatives.
For $k_\ell\in C^2$ define the derivative cross-matrix $J_\ell\in\mathbb{R}^{(Nd)\times p}$ by
\begin{equation}
(J_\ell)_{(i,j),m} \;=\; \partial_{x_j} k_\ell(x_i,z_m),
\qquad i=1,\dots,N,\; j=1,\dots,d,\; m=1,\dots,p,
\label{eq:J-ell}
\end{equation}
where $(i,j)$ indexes the flattened pair.
(For $d=1$, this is simply an $N\times p$ matrix of first derivatives.)

\paragraph{Mixture aggregation.}
Given mixture weights $\beta$, we form aggregated Nystr\"om matrices
\begin{equation}
C_\beta = \sum_{\ell=1}^L \beta_\ell C_\ell,
\qquad
W_\beta = \sum_{\ell=1}^L \beta_\ell W_\ell,
\qquad
J_\beta = \sum_{\ell=1}^L \beta_\ell J_\ell.
\label{eq:CWJ-beta}
\end{equation}
We stabilize $W_\beta$ by symmetrization and jitter:
\begin{equation}
W_\beta \leftarrow \frac{W_\beta+W_\beta^\top}{2} + \varepsilon_W I_p.
\label{eq:W-stab}
\end{equation}

\paragraph{Finite-dimensional operators.}
In the landmark span $\mathrm{span}\{k_\beta(\cdot,z_m)\}_{m=1}^p$, the empirical covariance and Dirichlet operators admit the matrix surrogates
\begin{equation}
\widehat{\Sigma}_\beta^{(p)}
\;:=\;
\frac{1}{N}\,C_\beta^\top C_\beta
\in\mathbb{R}^{p\times p},
\qquad
\widehat{L}_\beta^{(p)}
\;:=\;
\frac{1}{N}\,J_\beta^\top J_\beta
\in\mathbb{R}^{p\times p}.
\label{eq:SigmaL_mats}
\end{equation}
Regularization $\lambda>0$ in RKHS corresponds to adding $\lambda\Id$ in operator form, which becomes $\lambda W_\beta$ in landmark coordinates.
We define the regularized matrix
\begin{equation}
\widehat{L}_{\beta,\lambda}^{(p)}
\;:=\;
\widehat{L}_\beta^{(p)} + \lambda W_\beta.
\label{eq:Llam_mat}
\end{equation}

\paragraph{Generalized eigenproblem.}
We compute top-$r$ eigenpairs $(\mu_k, a_k)$ by solving
\begin{equation}
\widehat{\Sigma}_\beta^{(p)}\, a_k
\;=\;
\mu_k\,
\widehat{L}_{\beta,\lambda}^{(p)}\, a_k,
\qquad
a_k\in\mathbb{R}^p,
\label{eq:gen_eig_mat}
\end{equation}
with eigenvalues ordered $\mu_1\ge \mu_2\ge \cdots$.
Equivalently, $\mu_k$ are the leading generalized Rayleigh quotients associated with the matrix pair $(\widehat{\Sigma}^{(p)}_\beta,\widehat{L}^{(p)}_{\beta,\lambda})$.
This discretizes the characterization $\widehat{\Sigma}_\beta \psi = \mu \widehat{\mathcal{L}}_{\beta,\lambda}\psi$.

\paragraph{Lifting to sample evaluations.}
Given coefficients $a_k$, we evaluate the corresponding eigenfunction on samples by
\begin{equation}
\widehat{\phi}_k
\;=\;
C_\beta a_k
\in\mathbb{R}^N.
\label{eq:lifting_pv}
\end{equation}
Depending on the application, we may remove the constant-like mode and enforce centering and orthogonality constraints empirically.

\paragraph{Normalization and orthogonality.}
We enforce empirical $L^2(\mu)$ constraints to stabilize the outer-loop optimization:
\begin{equation}
\frac{1}{N}\mathbf{1}^\top \widehat{\phi}_k = 0,
\qquad
\frac{1}{N}\|\widehat{\phi}_k\|_2^2 = 1,
\qquad
\frac{1}{N}\widehat{\phi}_k^\top \widehat{\phi}_j = 0 \ (k\neq j),
\label{eq:normalize-orth}
\end{equation}
implemented by centering and Gram--Schmidt (or QR) orthonormalization in $\mathbb{R}^N$.
This gauge-fixing step resolves sign ambiguities (especially near eigenvalue crossings) and stabilizes
differentiation through the eigensolver in the outer loop. It does not change the estimated eigenspace.

% ------------------------------------------------------------
\subsection{Outer loop: variational objective}
\label{subsec:outer}

We optimize $u$ (hence $\beta$) by minimizing a practical outer objective that combines
spectral, subspace, RKHS, and optional generator-informed terms. This formulation is chosen to
match the losses used in the ablations of Section~\ref{sec:experiments}.

\paragraph{(a) Spectral term.}
Let $\mu_1(\beta)\ge \cdots \ge \mu_r(\beta)$ denote the leading generalized eigenvalues from
\eqref{eq:gen_eig_mat}. To encourage a strong leading KDM subspace, we use
\begin{equation}
\mathcal{L}_{\mathrm{eig}}(\beta) := - \sum_{k=1}^r \mu_k(\beta).
\label{eq:eig-loss}
\end{equation}
This term is especially effective on OU-type problems, but by itself can favor overly smooth kernels.

\paragraph{(b) Subspace orthonormality / anchoring.}
To stabilize optimization and prevent collapse, we enforce centering, normalization, and empirical
orthogonality of the lifted eigenfunctions:
\begin{equation}
\mathcal{L}_{\mathrm{sub}}(\beta)
:=
\sum_{k=1}^r
\Big(\tfrac{1}{N}\mathbf{1}^\top \widehat{\phi}_k\Big)^2
+
\sum_{k=1}^r
\Big(\tfrac{1}{N}\|\widehat{\phi}_k\|_2^2-1\Big)^2
+
\eta\!\!\sum_{1\le k<j\le r}
\Big(\tfrac{1}{N}\widehat{\phi}_k^\top \widehat{\phi}_j\Big)^2.
\label{eq:anchor-loss}
\end{equation}
This term fixes the empirical gauge and stabilizes differentiation through the eigensolver.

\paragraph{(c) RKHS regularization.}
In landmark coordinates $\widehat{\phi}_k(\cdot)=\sum_{m=1}^p (a_k)_m\, k_\beta(\cdot,z_m)$, the RKHS norm is
$\|\widehat{\phi}_k\|^2_{\Hk_{k_\beta}} = a_k^\top W_\beta a_k$.
We define
\begin{equation}
\mathcal{L}_{\mathrm{RKHS}}^{(k)}(\beta):=a_k^\top W_\beta a_k.
\label{eq:rkhs-loss}
\end{equation}

\paragraph{(d) Optional generator / PDE residual.}
When a generator discretization $G_h$ is available, we include
\begin{equation}
\mathcal{L}_{\mathrm{PDE}}^{(k)}(\beta)
=
\frac{1}{N}\sum_{i=1}^N
\Big( (G_h \widehat{\phi}_k)_i + \widehat{\lambda}_k\, (\widehat{\phi}_k)_i \Big)^2,
\label{eq:pde-loss}
\end{equation}
where $\widehat{\lambda}_k$ is the Rayleigh estimate from Appendix~\ref{app:pde:rayleigh}.
This term is omitted in purely geometric experiments.

\paragraph{Total objective.}
Combining the above, we minimize
\begin{equation}
\min_{u\in\mathbb{R}^L}
\quad
\mathcal{J}(u)
:=
\tau\,\mathcal{L}_{\mathrm{eig}}(\beta(u))
+
\alpha\,\mathcal{L}_{\mathrm{sub}}(\beta(u))
+
\gamma \sum_{k=1}^r \mathcal{L}_{\mathrm{RKHS}}^{(k)}(\beta(u))
+
\zeta \sum_{k=1}^r \mathcal{L}_{\mathrm{PDE}}^{(k)}(\beta(u))
+
\rho\,\Omega(\beta(u)),
\label{eq:total-objective}
\end{equation}
where $\tau,\alpha,\gamma,\zeta,\rho\ge 0$ are hyperparameters and $\Omega(\beta)$ is an
optional regularizer on the kernel weights.
This formulation subsumes the ablations used in Section~\ref{sec:experiments}:
\textbf{SubOnly} sets $\tau=0,\alpha>0$;
\textbf{EigOnly} sets $\tau>0,\alpha=0$;
\textbf{Combined} sets $\tau>0,\alpha>0,\gamma>0$, with $\zeta>0$ when generator information is available.

% ------------------------------------------------------------
\subsection{Joint optimization and computational complexity}
\label{subsec:opt}

We optimize \eqref{eq:total-objective} end-to-end by differentiating through the generalized eigen-solver whenever feasible.
We use Adam for warm-starting and optionally switch to L-BFGS for refinement.
To mitigate instabilities near eigenvalue crossings, we employ anchoring \eqref{eq:anchor-loss} and, when needed, losses on subspaces.

\paragraph{Precomputation.}
We precompute $\{C_\ell,W_\ell,J_\ell\}_{\ell=1}^L$ once.
The dominant cost is $O(LNp + Lp^2 + L N p d)$ due to derivative matrices.

\paragraph{Per-iteration cost.}
Forming $C_\beta,W_\beta,J_\beta$ costs $O(LNp + Lp^2 + L N p d)$.
Building $\widehat{\Sigma}_\beta^{(p)}$ and $\widehat{L}_\beta^{(p)}$ costs $O(Np^2 + Nd\,p^2)$ if done naively, but can be implemented as matrix products:
$C_\beta^\top C_\beta$ and $J_\beta^\top J_\beta$.
Solving the $p\times p$ generalized eigenproblem \eqref{eq:gen_eig_mat} costs $O(p^3)$ worst-case or $O(rp^2)$ for partial solvers.
Evaluating outer losses costs $O(Nr)$ plus the cost of applying $G_h$ (if PDE term enabled).

% ------------------------------------------------------------
\begin{algorithm}[t]
\caption{Variational Multiple Kernel Learning for Kernelized Diffusion Maps (VMKL--KDM)}
\label{alg:vmkl-kdm}
\begin{algorithmic}[1]
\Require Data $X=\{x_i\}_{i=1}^N$, landmarks $Z=\{z_j\}_{j=1}^p$, kernels $\{k_\ell\}_{\ell=1}^L$, modes $r$,
regularization $\lambda>0$, hyperparameters $(\gamma,\alpha,\rho,\eta,\varepsilon_W)$, generator discretization $G_h$.
\Ensure Mixture weights $\beta^\star$, eigenfunctions $\{\widehat{\phi}_k\}_{k=1}^r$.
\State Precompute $C_\ell,W_\ell,J_\ell$ via \eqref{eq:CW-ell} and \eqref{eq:J-ell} for all $\ell$.
\State Initialize $u\in\mathbb{R}^L$ (e.g., $u=0$ so $\beta$ is uniform).
\For{$t=1,\dots,T$}
  \State Compute $\beta=\mathrm{softmax}(u)$.
  \State Form $C_\beta,W_\beta,J_\beta$ via \eqref{eq:CWJ-beta}; stabilize $W_\beta$ via \eqref{eq:W-stab}.
  \State Build $\widehat{\Sigma}_\beta^{(p)}=\frac1N C_\beta^\top C_\beta$ and $\widehat{L}_\beta^{(p)}=\frac1N J_\beta^\top J_\beta$.
  \State Form $\widehat{L}_{\beta,\lambda}^{(p)}=\widehat{L}_\beta^{(p)}+\lambda W_\beta$.
  \State Solve generalized eigenproblem $\widehat{\Sigma}_\beta^{(p)} a_k=\mu_k \widehat{L}_{\beta,\lambda}^{(p)} a_k$ for top $r$ eigenpairs.
  \State Lift eigenfunctions $\widehat{\phi}_k = C_\beta a_k$ and normalize via \eqref{eq:normalize-orth}.
  \State Compute $\mathcal{J}(u)$ via \eqref{eq:total-objective} (omit $\mathcal{L}_{\mathrm{PDE}}$ if no $G_h$).
  \State Update $u \leftarrow u - \eta_t \nabla_u \mathcal{J}(u)$ (Adam/L-BFGS).
\EndFor
\State \Return $\beta^\star=\mathrm{softmax}(u)$ and $\{\widehat{\phi}_k\}_{k=1}^r$.
\end{algorithmic}
\end{algorithm}

% ============================================================
\section{Theory and Properties}
\label{sec:theory}

We establish rigorous properties of the adaptive-kernel KDM framework: well-posedness of the kernel mixture, Lipschitz dependence of the empirical operators on the mixture weights, spectral stability under a gap condition, existence of an optimizer, and the precise sense in which residual control certifies proximity to the target eigenspace.

\paragraph{Standing assumptions.}
Throughout Section~\ref{sec:theory}, we assume: (i) each base kernel $k_\ell$ is positive definite and $C^2$; (ii) the landmarks $Z=\{z_j\}_{j=1}^p$ are fixed (sensitivity to $Z$ is addressed empirically in Section~\ref{sec:experiments}); (iii) the regularization $\lambda>0$ is bounded away from zero; (iv) when spectral-gap conditions are invoked, we require a strictly positive gap $\delta>0$ at rank $r$ uniformly on the relevant subset of $\Delta_L$.
The theorem-by-theorem dependence is:
Prop.~\ref{prop:pd-mixture} and Prop.~\ref{prop:rkhs-mixture-norm} require only (i);
Prop.~\ref{prop:lipschitz-operators} requires (i) with bounded operator norms $M_C, M_J, M_W$;
Prop.~\ref{prop:uniform-pd} requires (i), (iii), and a uniform positive-definiteness condition on $W_\beta$;
Thm.~\ref{thm:projector-continuity} requires (i)--(iv);
Prop.~\ref{prop:existence-minimizer} requires continuity of $F$ and (iii);
Thm.~\ref{thm:residual-gap} requires self-adjointness of $\mathcal{G}$ with positive gap $\gamma_m$;
Props.~\ref{prop:sign-fixing}--\ref{prop:procrustes} are purely algebraic.

% ------------------------------------------------------------
\subsection{Well-posedness of convex kernel mixtures and RKHS geometry}
\label{subsec:theory-mixture}

Let $\{k_\ell\}_{\ell=1}^L$ be positive definite kernels on $\mathcal{X}$ and $k_\beta=\sum_{\ell=1}^L \beta_\ell k_\ell$ for $\beta\in\Delta_L$.

\begin{proposition}[Positive definiteness of mixtures]
\label{prop:pd-mixture}
If each $k_\ell$ is positive definite and $\beta\in\Delta_L$, then $k_\beta$ is positive definite.
\end{proposition}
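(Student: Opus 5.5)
The plan is to verify the defining property of positive definiteness directly at the level of Gram matrices. We read "positive definite" in the usual kernel-methods sense: $k$ is symmetric, and for every finite set $x_1,\dots,x_n\in\mathcal{X}$ and every $c\in\R^n$ we have $\sum_{i,j}c_ic_jk(x_i,x_j)\ge 0$. If the paper intends strict positive definiteness, meaning the quadratic form is strictly positive for distinct points and $c\neq 0$, we handle that separately at the end.

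\emph{Symmetry.} This follows because each $k_\ell$ is symmetric and $k_\beta$ is a finite linear combination of the $k_\ell$.

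\emph{Quadratic form.} Fix points $x_1,\dots,x_n$ and coefficients $c\in\R^n$. Let $K^{(\ell)}$ denote the Gram matrix of $k_\ell$ on these points. Then the Gram matrix of the mixture is $K^{(\beta)}=\sum_\ell\beta_\ell K^{(\ell)}$. By linearity,
\[
c^\top K^{(\beta)}c=\sum_{\ell=1}^L\beta_\ell\, c^\top K^{(\ell)}c .
\]
Each term $c^\top K^{(\ell)}c$ is nonnegative by positive definiteness of $k_\ell$, and each weight satisfies $\beta_\ell\ge 0$ because $\beta\in\Delta_L$. Hence the sum is nonnegative. The only place the simplex is used is the sign condition $\beta_\ell\ge 0$; the normalization $\sum_\ell\beta_\ell=1$ is not needed.

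\emph{Strict case.} Since $\sum_\ell\beta_\ell=1$, at least one index $\ell_0$ has $\beta_{\ell_0}>0$. Take distinct points and $c\neq 0$. The term $\beta_{\ell_0}c^\top K^{(\ell_0)}c$ is then strictly positive, and all other terms are nonnegative, so the whole sum is strictly positive. This is the only point where the normalization matters, and it is the only step that needs any care. Finally, by the Moore--Aronszajn theorem, $k_\beta$ determines a unique RKHS $\Hk_{k_\beta}$, which is the consequence used in the rest of the paper.
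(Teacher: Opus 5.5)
Your proposal is correct and follows the same route as the paper: expand $c^\top K_\beta c=\sum_\ell \beta_\ell\, c^\top K_\ell c$ and use $K_\ell\succeq 0$ together with $\beta_\ell\ge 0$. Your additional remarks on symmetry and on the strict case (using some $\beta_{\ell_0}>0$, guaranteed by $\sum_\ell\beta_\ell=1$) are also valid; the paper's proof does not include them.
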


\begin{proof}
For any $\{x_i\}_{i=1}^n\subset\mathcal{X}$ and $c\in\mathbb{R}^n$,
$c^\top K_\beta c = \sum_{\ell=1}^L \beta_\ell\, c^\top K_\ell c \ge 0$
since each $K_\ell$ is PSD and $\beta_\ell\ge 0$.
\end{proof}

\begin{proposition}[RKHS norm of a convex mixture]
\label{prop:rkhs-mixture-norm}
For $f\in \Hk_{k_\beta}$,
\[
\|f\|^2_{\Hk_{k_\beta}}
=
\inf_{\substack{f=\sum_{\ell} f_\ell\\ f_\ell\in\Hk_{k_\ell}}}
\sum_{\ell=1}^L \frac{\|f_\ell\|^2_{\Hk_{k_\ell}}}{\beta_\ell},
\]
with the convention that if $\beta_\ell=0$ then $f_\ell=0$.
\end{proposition}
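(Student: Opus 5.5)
This is the classical Aronszajn sum-of-kernels theorem, and I would prove it through the feature-map (pushforward) characterization of an RKHS. First I dispose of zero weights. Let $S=\{\ell:\beta_\ell>0\}$. Then $k_\beta=\sum_{\ell\in S}\beta_\ell k_\ell$, and the convention $f_\ell=0$ for $\beta_\ell=0$ reduces everything to the index set $S$. Next I rescale: the kernel $\beta_\ell k_\ell$ has RKHS $\Hk_{k_\ell}$ as a set, with norm $\|g\|^2_{\Hk_{\beta_\ell k_\ell}}=\|g\|^2_{\Hk_{k_\ell}}/\beta_\ell$. This follows by checking the reproducing property, since $\langle g,\beta_\ell k_\ell(x,\cdot)\rangle_{\Hk_{k_\ell}}/\beta_\ell=g(x)$. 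So it suffices to prove the unweighted statement: for positive definite kernels $h_\ell$ with $h=\sum_\ell h_\ell$, one has $\|f\|^2_{\Hk_h}=\inf\{\sum_\ell\|f_\ell\|^2_{\Hk_{h_\ell}}: f=\sum_\ell f_\ell\}$.

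For the unweighted statement, form the Hilbert direct sum $\mathcal{F}=\bigoplus_{\ell\in S}\Hk_{h_\ell}$ and the linear map $A:\mathcal{F}\to\R^{\mathcal{X}}$, $A(f_\ell)_\ell=\sum_\ell f_\ell$. The feature map $\Phi(x)=(h_\ell(x,\cdot))_\ell\in\mathcal{F}$ satisfies two identities:
\[
\langle\Phi(x),\Phi(y)\rangle_{\mathcal{F}}=h(x,y),\qquad (A\mathbf{f})(x)=\langle \mathbf{f},\Phi(x)\rangle_{\mathcal{F}}.
\]
The main step is the standard lemma that $A$ is then a partial isometry. Concretely:
\begin{itemize}
\item[(a)] The kernel of $A$ is the closed subspace $\mathcal{N}=\{\mathbf{f}:\langle\mathbf{f},\Phi(x)\rangle=0\ \forall x\}$.
\item[(b)] The restriction of $A$ to $\mathcal{N}^\perp=\overline{\mathrm{span}}\,\Phi(\mathcal{X})$ is injective.
\item[(c)] Equip the range $A(\mathcal{F})$ with $\|A\mathbf{f}\|:=\|P_{\mathcal{N}^\perp}\mathbf{f}\|_{\mathcal{F}}$. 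This makes $A(\mathcal{F})$ a Hilbert space isometric to $\mathcal{N}^\perp$, and it has reproducing kernel $h$, because $A\Phi(x)=h(x,\cdot)$ and $\langle A\mathbf{f},h(x,\cdot)\rangle=\langle P\mathbf{f},\Phi(x)\rangle=(A\mathbf{f})(x)$.
\end{itemize}
By uniqueness of the RKHS (Moore--Aronszajn), $A(\mathcal{F})=\Hk_h$ with this norm. Since $P_{\mathcal{N}^\perp}\mathbf{f}$ is the minimum-norm element of the affine fiber $\mathbf{f}+\mathcal{N}=A^{-1}(f)$, the norm equals the infimum. The infimum is moreover attained.

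A final point is that the set $\{f=\sum_\ell f_\ell\}$ coincides with $\Hk_{k_\beta}$, so the infimum is over a nonempty set exactly when $f\in\Hk_{k_\beta}$. This is the surjectivity half of (c). I expect the main obstacle to be verifying (c) cleanly, in particular showing that every $f\in\Hk_h$ lies in $A(\mathcal{F})$. I would handle it by noting that $A(\mathcal{F})$ is complete (being isometric to the closed subspace $\mathcal{N}^\perp$) and contains all $h(x,\cdot)$. A complete reproducing-kernel Hilbert space with kernel $h$ is unique, so it equals $\Hk_h$, and the rest is bookkeeping.
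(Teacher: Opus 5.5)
Your proposal is correct and takes the same route as the paper. The paper also forms the direct sum $\bigoplus_\ell \Hk_{k_\ell}$ with inner product $\sum_\ell \beta_\ell^{-1}\langle f_\ell,g_\ell\rangle_{\Hk_{k_\ell}}$, which is exactly your rescaled unweighted sum of the spaces $\Hk_{\beta_\ell k_\ell}$, and identifies $\Hk_{k_\beta}$ as the image of the summation map with the quotient norm; your argument via $\mathcal{N}=\Phi(\mathcal{X})^\perp$, the minimum-norm element of each fiber, and Moore--Aronszajn uniqueness supplies the details the paper only asserts.
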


\begin{proof}
The Hilbert direct sum $\mathcal{H}_\oplus=\bigoplus_\ell \Hk_{k_\ell}$ with norm $\|(f_\ell)\|^2_{\oplus,\beta}=\sum_\ell \beta_\ell^{-1}\|f_\ell\|^2_{\Hk_{k_\ell}}$ and the summation map $S(f_1,\dots,f_L)=\sum_\ell f_\ell$ yield an RKHS with kernel $k_\beta$ and the stated quotient norm.
\end{proof}

% ------------------------------------------------------------
\subsection{Lipschitz dependence of empirical operators on kernel weights}
\label{subsec:theory-lipschitz}

\begin{proposition}[Lipschitz continuity of Nystr\"om operators]
\label{prop:lipschitz-operators}
Let $M_C=\max_\ell\|C_\ell\|_2$, $M_J=\max_\ell\|J_\ell\|_2$, $M_W=\max_\ell\|W_\ell\|_2$.
Then for any $\beta,\beta'\in\Delta_L$,
\begin{align}
\|\widehat{\Sigma}^{(p)}_\beta - \widehat{\Sigma}^{(p)}_{\beta'}\|_2 &\le \frac{2M_C^2}{N}\|\beta-\beta'\|_1, \label{eq:lip-sigma}\\
\|\widehat{L}^{(p)}_{\beta,\lambda} - \widehat{L}^{(p)}_{\beta',\lambda}\|_2 &\le \left(\frac{2M_J^2}{N}+\lambda M_W\right)\|\beta-\beta'\|_1. \label{eq:lip-L}
\end{align}
\end{proposition}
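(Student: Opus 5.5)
The plan is to exploit the linearity of the aggregated Nyström matrices in $\beta$. We then bound the quadratic forms $C_\beta^\top C_\beta$ and $J_\beta^\top J_\beta$ with the usual ``add and subtract'' identity. First, by \eqref{eq:CWJ-beta}, $C_\beta-C_{\beta'}=\sum_\ell(\beta_\ell-\beta'_\ell)C_\ell$. The triangle inequality then gives $\|C_\beta-C_{\beta'}\|_2\le M_C\|\beta-\beta'\|_1$. Since $\beta\in\Delta_L$, the same argument gives $\|C_\beta\|_2\le\sum_\ell\beta_\ell\|C_\ell\|_2\le M_C$, and likewise for $C_{\beta'}$. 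The identical statements hold for $J$ (with $M_J$) and for $W$ (with $M_W$).

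Second, for the covariance surrogate I will write
\[
C_\beta^\top C_\beta - C_{\beta'}^\top C_{\beta'} = C_\beta^\top (C_\beta-C_{\beta'}) + (C_\beta-C_{\beta'})^\top C_{\beta'} .
\]
Submultiplicativity of the spectral norm and the bounds from the first step then give $\|C_\beta^\top C_\beta - C_{\beta'}^\top C_{\beta'}\|_2\le 2M_C^2\|\beta-\beta'\|_1$. Dividing by $N$ yields \eqref{eq:lip-sigma}.

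Third, for \eqref{eq:lip-L} the same decomposition applied to $J_\beta$ bounds the difference of the $\frac1N J_\beta^\top J_\beta$ terms by $\frac{2M_J^2}{N}\|\beta-\beta'\|_1$. The regularization part contributes $\lambda(W_\beta-W_{\beta'})$. Its norm is at most $\lambda M_W\|\beta-\beta'\|_1$ by linearity, and a final triangle inequality gives the stated constant.

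The only point needing care is the stabilization step \eqref{eq:W-stab}. Symmetrization is linear in $W_\beta$ and does not increase the spectral norm. The jitter $\varepsilon_W I_p$ does not depend on $\beta$, so it cancels in the difference. Hence the stabilized $W_\beta$ satisfies the same Lipschitz bound. This is the main thing I would check explicitly; everything else is routine norm bookkeeping.
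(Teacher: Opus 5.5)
Your proof is correct and follows the paper's argument exactly. Both use linearity of $C_\beta, J_\beta, W_\beta$ in $\beta$ with the triangle inequality, the add-and-subtract factorization $C_\beta^\top(C_\beta-C_{\beta'})+(C_\beta-C_{\beta'})^\top C_{\beta'}$, and the bound $\|C_\beta\|_2\le M_C$ from $\beta\in\Delta_L$; your check that the symmetrization in \eqref{eq:W-stab} does not increase the norm and that the jitter $\varepsilon_W I_p$ cancels is a detail the paper leaves implicit under ``follows analogously.''
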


\begin{proof}
Since $C_\beta-C_{\beta'}=\sum_\ell(\beta_\ell-\beta'_\ell)C_\ell$, the triangle inequality gives $\|C_\beta-C_{\beta'}\|_2\le M_C\|\beta-\beta'\|_1$.
Factoring $\widehat{\Sigma}^{(p)}_\beta-\widehat{\Sigma}^{(p)}_{\beta'}=N^{-1}(C_\beta^\top(C_\beta-C_{\beta'})+(C_\beta-C_{\beta'})^\top C_{\beta'})$ and using $\|C_\beta\|_2\le M_C$ yields~\eqref{eq:lip-sigma}. The bound~\eqref{eq:lip-L} follows analogously.
\end{proof}

\begin{proposition}[Uniform positive definiteness]
\label{prop:uniform-pd}
If $\lambda>0$ and $W_\beta\succeq\gamma I_p$ for all $\beta\in\Delta_L$, then $\widehat{L}^{(p)}_{\beta,\lambda}\succeq\lambda\gamma I_p$ for all $\beta$, and the generalized eigenproblem is well-posed everywhere on $\Delta_L$.
\end{proposition}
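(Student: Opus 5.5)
The argument is a direct Loewner-order computation on the regularized matrix $\widehat{L}^{(p)}_{\beta,\lambda}=\widehat{L}^{(p)}_\beta+\lambda W_\beta$ from \eqref{eq:Llam_mat}.

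First, the Dirichlet surrogate is positive semidefinite. By \eqref{eq:SigmaL_mats}, $\widehat{L}^{(p)}_\beta=\frac1N J_\beta^\top J_\beta$, so for every $a\in\mathbb{R}^p$ we have $a^\top \widehat{L}^{(p)}_\beta a=\frac1N\|J_\beta a\|_2^2\ge 0$. This holds for every $\beta$ and needs no assumption on the kernels.

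Second, combine this with the hypothesis $W_\beta\succeq\gamma I_p$. For any $a$,
\[
a^\top \widehat{L}^{(p)}_{\beta,\lambda}a=\tfrac1N\|J_\beta a\|_2^2+\lambda\,a^\top W_\beta a\ge\lambda\gamma\|a\|_2^2,
\]
so $\widehat{L}^{(p)}_{\beta,\lambda}\succeq\lambda\gamma I_p$ uniformly in $\beta\in\Delta_L$. Here $W_\beta$ is understood as the symmetrized, jittered matrix of \eqref{eq:W-stab}, since that is the one entering \eqref{eq:Llam_mat}. Symmetry of $\widehat{L}^{(p)}_{\beta,\lambda}$ is then clear, because both summands are symmetric. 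Incidentally, the jitter alone already gives $\gamma\ge\varepsilon_W$ whenever each $W_\ell$ is PSD; by Prop.~\ref{prop:pd-mixture}, each $W_\ell$ is PSD because $k_\ell$ is positive definite.

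Third, for well-posedness, note that $\widehat{L}^{(p)}_{\beta,\lambda}$ is symmetric positive definite, so it has a Cholesky factorization $R_\beta^\top R_\beta$. The generalized problem \eqref{eq:gen_eig_mat} is then equivalent to the standard symmetric eigenproblem for $R_\beta^{-\top}\widehat{\Sigma}^{(p)}_\beta R_\beta^{-1}$, which is PSD because $\widehat{\Sigma}^{(p)}_\beta=\frac1N C_\beta^\top C_\beta$. Consequently:
\begin{itemize}
\item there are $p$ real, nonnegative generalized eigenvalues;
\item there is a $\widehat{L}^{(p)}_{\beta,\lambda}$-orthonormal eigenbasis;
\item the eigenvalues admit the min--max characterization.
\end{itemize}
For a quantitative statement, the eigenvalues are bounded by $\|\widehat{\Sigma}^{(p)}_\beta\|_2/(\lambda\gamma)\le M_C^2/(N\lambda\gamma)$. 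Moreover $\|(\widehat{L}^{(p)}_{\beta,\lambda})^{-1}\|_2\le 1/(\lambda\gamma)$ uniformly on $\Delta_L$, which is the bound later needed to combine with Prop.~\ref{prop:lipschitz-operators}.

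No step is a genuine obstacle. The only point needing care is interpretive: one must make explicit which $W_\beta$ is meant (the stabilized one) and that "well-posed" means the Cholesky reduction exists, with a uniform bound on its inverse.
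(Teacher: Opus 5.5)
Your proof is correct and is essentially the intended argument. The paper states this proposition without a written proof, and the implicit content is exactly $\tfrac1N J_\beta^\top J_\beta\succeq 0$ plus $\lambda W_\beta\succeq\lambda\gamma I_p$, with well-posedness coming from the Cholesky reduction to a symmetric PSD eigenproblem as you describe. One small slip: $W_\ell\succeq 0$ is simply the definition of $k_\ell$ being positive definite, whereas Prop.~\ref{prop:pd-mixture} is what gives $\sum_\ell\beta_\ell W_\ell\succeq 0$ (hence $\gamma\ge\varepsilon_W$ after jitter), so that citation is misattributed but harmless.
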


% ------------------------------------------------------------
\subsection{Spectral stability under a gap condition}
\label{subsec:theory-stability}

\begin{theorem}[Continuity of spectral projectors]
\label{thm:projector-continuity}
Let $A_\beta=(\widehat{L}^{(p)}_{\beta,\lambda})^{-1/2}\,\widehat{\Sigma}^{(p)}_\beta\,(\widehat{L}^{(p)}_{\beta,\lambda})^{-1/2}$.
If the spectral gap at rank $r$ satisfies $\widehat\mu_r(A_\beta)-\widehat\mu_{r+1}(A_\beta)\ge\delta>0$ uniformly on $U\subset\Delta_L$, then the spectral projector $P_\beta$ onto the leading $r$-dimensional eigenspace is Lipschitz:
\[
\|P_\beta-P_{\beta'}\|_2 \le \frac{2C_0}{\delta}\|\beta-\beta'\|_1,\qquad \beta,\beta'\in U,
\]
where $C_0$ is the Lipschitz constant of $\beta\mapsto A_\beta$ from Proposition~\ref{prop:lipschitz-operators}.
\end{theorem}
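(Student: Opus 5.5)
The plan has three steps: get a Lipschitz bound for $\beta\mapsto A_\beta$, apply a Davis--Kahan-type perturbation bound for spectral projectors of symmetric matrices, and then match the constant $2C_0/\delta$.

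\textbf{Step 1: Lipschitz bound for $A_\beta$.} Proposition~\ref{prop:lipschitz-operators} controls only the pair $(\widehat{\Sigma}^{(p)}_\beta,\widehat{L}^{(p)}_{\beta,\lambda})$, not $A_\beta$ itself. Write $L_\beta:=\widehat{L}^{(p)}_{\beta,\lambda}$ and $S_\beta:=\widehat{\Sigma}^{(p)}_\beta$. By Proposition~\ref{prop:uniform-pd}, $L_\beta\succeq\lambda\gamma I_p$, so $\|L_\beta^{-1/2}\|_2\le(\lambda\gamma)^{-1/2}$. For the inverse square root I would use the integral representation
\[
L^{-1/2}=\frac{2}{\pi}\int_0^\infty (L+t^2 I)^{-1}\,dt.
\]
This gives
\[
\|L_\beta^{-1/2}-L_{\beta'}^{-1/2}\|_2\le \tfrac12(\lambda\gamma)^{-3/2}\|L_\beta-L_{\beta'}\|_2 .
\]
Telescoping
\[
A_\beta-A_{\beta'}=(L_\beta^{-1/2}-L_{\beta'}^{-1/2})S_\beta L_\beta^{-1/2}+L_{\beta'}^{-1/2}(S_\beta-S_{\beta'})L_\beta^{-1/2}+L_{\beta'}^{-1/2}S_{\beta'}(L_\beta^{-1/2}-L_{\beta'}^{-1/2})
\]
and using $\|S_\beta\|_2\le M_C^2/N$ together with \eqref{eq:lip-sigma}--\eqref{eq:lip-L} yields $\|A_\beta-A_{\beta'}\|_2\le C_0\|\beta-\beta'\|_1$. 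The constant $C_0$ is explicit in $M_C,M_J,M_W,\lambda,\gamma,N$. The statement simply names $C_0$, so I would record this derivation as the justification that such a constant exists.

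\textbf{Step 2: perturbation of spectral projectors.} Each $A_\beta$ is symmetric, so I would apply the Davis--Kahan $\sin\Theta$ theorem, or equivalently the Riesz projector formula. The gap hypothesis holds at both endpoints, which I will need in Step 3. Let
\[
\Gamma=\{z:\operatorname{Re}z=\widehat\mu_r(A_\beta)-\delta/2\}
\]
closed at infinity, or equivalently a contour enclosing the top $r$ eigenvalues. Then
\[
P_\beta-P_{\beta'}=\frac{1}{2\pi i}\oint\big[(z-A_\beta)^{-1}(A_\beta-A_{\beta'})(z-A_{\beta'})^{-1}\big]dz .
\]
A cleaner route avoids the contour. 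The symmetric Davis--Kahan bound gives $\|\sin\Theta\|_2\le\|E\|_2/\mathrm{gap}$ with $E=A_\beta-A_{\beta'}$, and for equal-rank orthogonal projectors $\|P_\beta-P_{\beta'}\|_2=\|\sin\Theta\|_2$.

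\textbf{Step 3: matching the constant, the main obstacle.} The delicate point is that Davis--Kahan needs the separation between the top-$r$ eigenvalues of $A_\beta$ and the bottom eigenvalues of $A_{\beta'}$, not the gap of a single matrix. I would handle it in two cases.

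\emph{Small perturbation, $\|E\|_2\le\delta/2$.} By Weyl's inequality, $\widehat\mu_{r+1}(A_{\beta'})\le\widehat\mu_{r+1}(A_\beta)+\|E\|_2$. The relevant separation is therefore at least $\delta-\|E\|_2\ge\delta/2$. This gives $\|P_\beta-P_{\beta'}\|_2\le 2\|E\|_2/\delta\le (2C_0/\delta)\|\beta-\beta'\|_1$.

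\emph{Large perturbation, $\|E\|_2>\delta/2$.} Use the trivial bound $\|P_\beta-P_{\beta'}\|_2\le1<2\|E\|_2/\delta\le(2C_0/\delta)\|\beta-\beta'\|_1$.

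Combining the two cases gives the stated global Lipschitz estimate on $U$. I do not even need convexity of $U$; a path-integration argument through $U$ would have required it.
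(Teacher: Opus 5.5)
Your proposal is correct and follows the same route as the paper: Lipschitz continuity of $\beta\mapsto A_\beta$ via Propositions~\ref{prop:lipschitz-operators} and~\ref{prop:uniform-pd}, then Davis--Kahan under the uniform gap $\delta$. You also supply two steps the paper leaves implicit, and both check out: the explicit Lipschitz bound for $(\widehat{L}^{(p)}_{\beta,\lambda})^{-1/2}$ via the resolvent integral, which gives an explicit $C_0$, and the Weyl small/large-perturbation split that turns the single-matrix gap into the factor $2/\delta$.
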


\begin{proof}
By Propositions~\ref{prop:lipschitz-operators} and~\ref{prop:uniform-pd}, $\beta\mapsto A_\beta$ is Lipschitz.
Applying the Davis--Kahan $\sin\Theta$ theorem under the uniform gap $\delta$ yields $\|P_\beta-P_{\beta'}\|_2\le (2/\delta)\|A_\beta-A_{\beta'}\|_2$.
\end{proof}

% ------------------------------------------------------------
\subsection{Existence of a minimizer and residual control}
\label{subsec:theory-existence}

\begin{proposition}[Existence of a minimizer]
\label{prop:existence-minimizer}
If $\widehat{L}^{(p)}_{\beta,\lambda}$ is positive definite for all $\beta\in\Delta_L$ and the outer objective $F(\beta)$ is continuous, then $F$ attains a global minimum on $\Delta_L$.
\end{proposition}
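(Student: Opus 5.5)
The plan is to apply the Weierstrass extreme value theorem on the simplex. First, I will check that $\Delta_L$ is nonempty and compact. It is the intersection of the closed nonnegative orthant with the closed affine hyperplane $\{\sum_\ell\beta_\ell=1\}$, so it is closed. It is bounded because $0\le\beta_\ell\le 1$ for every $\ell$. Heine--Borel then gives compactness in $\R^L$. A continuous real-valued function on a nonempty compact set is bounded and attains its infimum. So once $F$ is known to be a well-defined, finite, continuous map on all of $\Delta_L$, the conclusion follows at once.

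The role of the hypothesis that $\widehat{L}^{(p)}_{\beta,\lambda}\succ 0$ everywhere is to make $F$ well defined on the whole simplex, with no singular points. With this hypothesis the generalized eigenproblem~\eqref{eq:gen_eig_mat} can be Cholesky-reduced to the symmetric matrix $A_\beta$ of Theorem~\ref{thm:projector-continuity}, whose eigenvalues $\mu_k(\beta)$ are real and finite. I will make this concrete as follows.
\begin{itemize}
\item By Proposition~\ref{prop:lipschitz-operators}, the maps $\beta\mapsto\widehat{\Sigma}^{(p)}_\beta$ and $\beta\mapsto\widehat{L}^{(p)}_{\beta,\lambda}$ are Lipschitz, hence continuous.
\item On the open cone of positive definite matrices, inversion and the matrix square root are continuous. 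Therefore $\beta\mapsto A_\beta$ is continuous on $\Delta_L$.
\item By Weyl's inequality, each ordered eigenvalue $\mu_k(\beta)$ is continuous in $\beta$, even without a gap. So the spectral term $\mathcal{L}_{\mathrm{eig}}$ is continuous.
\end{itemize}
This shows that the stated continuity assumption on $F$ is at least consistent with its main component.

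The main obstacle is that the remaining terms (anchoring, RKHS, PDE) depend on eigenvectors $a_k$, which need not vary continuously at eigenvalue crossings. For this reason I will not try to prove continuity of $F$. It is a hypothesis of the proposition, and the proof only needs to invoke it. I will add a remark covering the cases where it can actually be verified:
\begin{itemize}
\item If the gap condition of Theorem~\ref{thm:projector-continuity} holds on $\Delta_L$, or if these terms are written as functions of the projector $P_\beta$ (which is gauge-invariant, as after the normalization~\eqref{eq:normalize-orth}), then they are continuous.
\item The optional regularizer $\Omega$ is taken to be continuous.
\end{itemize}
With continuity granted, Weierstrass yields some $\beta^\star\in\Delta_L$ with $F(\beta^\star)=\min_{\Delta_L}F$.

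Finally, I will note that the theorem does not transfer to the unconstrained parameter $u$. Under the softmax map~\eqref{eq:softmax} the minimizer may lie on the boundary of $\Delta_L$, which is approached only as $\|u\|\to\infty$. This is why the statement is phrased on $\Delta_L$ rather than on $\R^L$.
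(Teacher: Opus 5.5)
Your proof is correct and matches the paper's: $\Delta_L$ is compact, $F$ is continuous by hypothesis, and the Weierstrass theorem gives a global minimizer. Your extra remarks are sound supplementary commentary, namely well-definedness via positive definiteness, continuity of $\mathcal{L}_{\mathrm{eig}}$ by Weyl's inequality, and the caveat that softmax only reaches the interior of the simplex, with one small qualification: a gap at rank $r$ alone makes only projector-level (rotation-invariant) quantities continuous, not the individual eigenvectors $a_k$.
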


\begin{proof}
$\Delta_L$ is compact and $F$ is continuous; apply the Weierstrass theorem.
\end{proof}

\begin{theorem}[Residual control implies proximity to target eigenspace]
\label{thm:residual-gap}
Let $\mathcal{G}$ be a self-adjoint generator with eigenvalues $\{-\lambda_j\}$ and spectral gap $\gamma_m=\inf_{j\ne m}|\lambda_j-\lambda_m|>0$ at mode $m$.
If $\phi\in L^2(\mu)$ with $\|\phi\|=1$ and residual $r=\mathcal{G}\phi+\lambda_m\phi$, then
\[
\mathrm{dist}(\phi,\mathrm{span}\{e_m\}) \le \frac{\|r\|}{\gamma_m}.
\]
More generally, for a cluster $I$ with gap $\gamma_I=\inf_{j\notin I,i\in I}|\lambda_j-\lambda_i|>0$,
\[
\mathrm{dist}(\phi,\mathcal{E}_I) \le \frac{\|(\mathcal{G}+\lambda)\phi\|}{\gamma_I}.
\]
\end{theorem}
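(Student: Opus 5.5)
This is the classical Kato--Temple / residual (Weinstein) argument, and I will prove it by spectral decomposition of $\mathcal{G}$. I will assume $\mathcal{G}$ has pure point spectrum with an orthonormal eigenbasis $\{e_j\}$ of $L^2(\mu)$, $\mathcal{G}e_j=-\lambda_j e_j$; this is implicit in the statement's notation. For the general case I will replace sums by the spectral measure of $\mathcal{G}$, and the argument is unchanged. I also assume $\phi\in\mathrm{dom}(\mathcal{G})$, so that the residual is defined.

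\emph{Single mode.} Expand $\phi=\sum_j c_j e_j$ with $c_j=\langle \phi,e_j\rangle$ and $\sum_j|c_j|^2=1$. Then
\[
r=\mathcal{G}\phi+\lambda_m\phi=\sum_j(\lambda_m-\lambda_j)c_je_j,
\]
so by Parseval $\|r\|^2=\sum_{j}|\lambda_j-\lambda_m|^2|c_j|^2$. The term $j=m$ vanishes. For $j\neq m$ I use $|\lambda_j-\lambda_m|\ge\gamma_m$, which gives
\[
\|r\|^2\ge\gamma_m^2\sum_{j\ne m}|c_j|^2=\gamma_m^2\,\|\phi-c_me_m\|^2 .
\]
Since $c_me_m$ is the orthogonal projection of $\phi$ onto $\mathrm{span}\{e_m\}$, we have $\|\phi-c_me_m\|=\mathrm{dist}(\phi,\mathrm{span}\{e_m\})$. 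Taking square roots gives the first claim. The normalization $\|\phi\|=1$ is not used; it only makes the distance comparable to a sine of an angle. If $\lambda_m$ is degenerate, $e_m$ should be read as the full eigenspace of $\lambda_m$, since otherwise $\gamma_m=0$.

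\emph{Cluster.} The second statement is ambiguous about which scalar $\lambda$ appears in $\|(\mathcal{G}+\lambda)\phi\|$, and this is the step I expect to need care. A bound of the form $|\lambda_j-\lambda|\ge\gamma_I$ for all $j\notin I$ holds only when $\lambda$ lies in the convex hull of $\{\lambda_i\}_{i\in I}$ and the cluster is separated from the rest of the spectrum on both sides. Even then, $\gamma_I$ measured from an arbitrary point of the cluster can be weaker than the stated quantity.

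I will handle this as follows. I take $\lambda=\lambda_i$ for some $i\in I$ with $I$ a singleton-eigenvalue cluster; more generally, I take any $\lambda$ satisfying $\inf_{j\notin I}|\lambda_j-\lambda|\ge\gamma_I$. I then repeat the computation:
\[
\|(\mathcal{G}+\lambda)\phi\|^2=\sum_j|\lambda-\lambda_j|^2|c_j|^2\ge\gamma_I^2\sum_{j\notin I}|c_j|^2=\gamma_I^2\|\phi-P_I\phi\|^2 ,
\]
where $P_I$ is the orthogonal projector onto $\mathcal{E}_I=\overline{\mathrm{span}}\{e_i:i\in I\}$. The key step is discarding the nonnegative terms with $j\in I$.

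I will state the admissible choice of $\lambda$ explicitly, i.e.\ $\mathrm{dist}(\lambda,\{\lambda_j\}_{j\notin I})\ge\gamma_I$. This holds for instance when $I$ is a single eigenvalue, or when $\lambda=\lambda_i$ and the gap is understood as $\inf_{j\notin I}|\lambda_j-\lambda_i|$ for that $i$. This is the one point where the statement must be interpreted rather than proved verbatim.
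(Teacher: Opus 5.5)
Your proof is correct and is essentially the paper's argument: expand $\phi$ in the eigenbasis, apply Parseval, discard the terms with $j=m$ (resp.\ $j\in I$, which the paper phrases as projecting onto $\mathcal{E}_I^\perp$), and bound the remaining coefficients from below using the gap. Your explicit admissibility condition $\mathrm{dist}(\lambda,\{\lambda_j\}_{j\notin I})\ge\gamma_I$ is a fair clarification of what the paper leaves implicit. Note that it holds automatically for every $\lambda=\lambda_i$ with $i\in I$, not only for singleton clusters, because $\gamma_I$ is already an infimum over all such $i$.
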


\begin{proof}
Expanding $\phi=\sum_j c_j e_j$ gives $\|r\|^2=\sum_{j\ne m}|\lambda_m-\lambda_j|^2|c_j|^2\ge\gamma_m^2\sum_{j\ne m}|c_j|^2=\gamma_m^2\,\mathrm{dist}(\phi,\mathrm{span}\{e_m\})^2$.
The cluster case follows by projecting onto $\mathcal{E}_I^\perp$ where $\mathcal{G}+\lambda$ has spectrum bounded below by $\gamma_I$.
\end{proof}

\paragraph{Significance.}
Theorem~\ref{thm:residual-gap} makes the generator residual term $\mathcal{L}_{\mathrm{PDE}}$ mathematically meaningful: small residual is not merely a heuristic objective but a certificate of proximity to the target eigenspace whenever a spectral gap is available.
Concretely, the empirical PDE loss $\mathcal{L}_{\mathrm{PDE}}=\frac{1}{N}\|G_h\hat\phi_k+\hat\lambda_k\hat\phi_k\|_2^2$ in~\eqref{eq:total-objective} is a finite-sample estimate of $\|r\|^2$; Theorem~\ref{thm:residual-gap} then guarantees that minimizing this loss drives $\hat\phi_k$ toward the true eigenspace at a rate controlled by the inverse spectral gap $1/\gamma_m$.
In practice, when the KDM estimator is consistent for the generator spectrum, $\gamma_m$ can be estimated from the CV-selected eigenvalues $\hat\mu_1>\cdots>\hat\mu_r>\hat\mu_{r+1}$ as $\hat\gamma_r\approx\hat\mu_r-\hat\mu_{r+1}$; the quality of this proxy depends on the kernel approximation and sample size.
This justifies using scalable operator surrogates as advocated in KDM~\citep{pmlr-v195-pillaud-vivien23a} and aligns with the Galerkin viewpoint of~\citet{pmlr-v238-cabannes24a}.

% ------------------------------------------------------------
\subsection{Identifiability: sign and rotational ambiguity}
\label{subsec:identifiability}

Eigenfunctions are determined only up to sign (for simple eigenvalues) and orthogonal rotation (within degenerate eigenspaces).
While these ambiguities do not affect the span $E_\beta$, they induce unstable gradients during end-to-end optimization.

\begin{proposition}[Anchoring resolves sign ambiguity]
\label{prop:sign-fixing}
Let $u\in\mathbb{R}^N$ with $\|u\|_2=1$ be a simple eigenvector and $a\in\mathbb{R}^N$ an anchor with $\langle a,u\rangle\ne 0$.
Then the sign is fixed uniquely by requiring $\langle a,v\rangle>0$: exactly one of $\{u,-u\}$ minimizes $\|v-a\|_2^2$.
\end{proposition}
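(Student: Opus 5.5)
The plan is a direct algebraic computation. The only degree of freedom in a simple unit eigenvector is its sign, so the admissible candidates are exactly $v\in\{u,-u\}$. For each candidate I will expand the squared distance to the anchor:
\[
\|v-a\|_2^2=\|v\|_2^2-2\langle a,v\rangle+\|a\|_2^2 = 1+\|a\|_2^2-2\langle a,v\rangle,
\]
using $\|v\|_2=\|u\|_2=1$. Since $1+\|a\|_2^2$ is the same for both candidates, minimizing $\|v-a\|_2^2$ over $\{u,-u\}$ is equivalent to maximizing $\langle a,v\rangle$. Specialized to the two candidates, this gives
\[
\|u-a\|_2^2-\|{-u}-a\|_2^2=-4\langle a,u\rangle .
\]

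Next I will use the hypothesis $\langle a,u\rangle\neq 0$. It makes this difference nonzero, so the two distances are distinct and exactly one candidate is the strict minimizer. That minimizer is $\mathrm{sgn}(\langle a,u\rangle)\,u$, which is precisely the unique candidate satisfying $\langle a,v\rangle>0$. The other candidate has $\langle a,v\rangle<0$. This shows both that the sign rule ``$\langle a,v\rangle>0$'' selects a unique vector and that it coincides with the distance-minimization rule.

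Finally, I will note why simplicity of the eigenvalue is what restricts the ambiguity to $\{u,-u\}$. The eigenspace is one-dimensional, so the unit-norm real eigenvectors are exactly $\pm u$. (For degenerate eigenspaces the ambiguity is a full orthogonal group, which is the separate Procrustes setting.) No step is a genuine obstacle. The only point needing care is the nondegeneracy condition $\langle a,u\rangle\neq0$: without it the two distances tie, and uniqueness fails.
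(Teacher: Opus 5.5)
Your proposal is correct: expanding $\|v-a\|_2^2=1+\|a\|_2^2-2\langle a,v\rangle$ and noting that the two candidates differ by $-4\langle a,u\rangle\neq 0$ is the ``purely algebraic'' verification the paper has in mind. The paper states this proposition without a written proof, so your expansion supplies exactly the omitted argument.
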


\begin{proposition}[Procrustes alignment resolves rotational ambiguity]
\label{prop:procrustes}
For two orthonormal bases $U,\widetilde{U}\in\mathbb{R}^{N\times r}$, the Procrustes problem $\min_{Q\in O(r)}\|\widetilde{U}Q-U\|_F^2$ has minimizer $Q_\star=MV^\top$ where $\widetilde{U}^\top U=M\Sigma V^\top$ is the SVD, with minimum value $2r-2\mathrm{tr}(\Sigma)$.
\end{proposition}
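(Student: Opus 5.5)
The statement to prove is Proposition~\ref{prop:procrustes}. The plan is to reduce the Procrustes problem to maximizing a linear functional over $O(r)$, and then solve that via the SVD and von Neumann's trace inequality.

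\textbf{Step 1: expand the objective.} For any $Q\in O(r)$ we have $\|\widetilde U Q\|_F^2=\mathrm{tr}(Q^\top\widetilde U^\top\widetilde U Q)=\mathrm{tr}(I_r)=r$, because $\widetilde U^\top\widetilde U=I_r$ and $Q$ is orthogonal. Likewise $\|U\|_F^2=r$. Expanding the square therefore gives
\[
\|\widetilde U Q-U\|_F^2 = 2r-2\,\mathrm{tr}(Q^\top \widetilde U^\top U).
\]
So the minimization is equivalent to maximizing $\mathrm{tr}(Q^\top \widetilde U^\top U)$ over $Q\in O(r)$.

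\textbf{Step 2: insert the SVD.} Write $\widetilde U^\top U=M\Sigma V^\top$, with $M,V\in O(r)$ and $\Sigma=\mathrm{diag}(\sigma_1,\dots,\sigma_r)\succeq 0$. Cyclicity of the trace gives
\[
\mathrm{tr}(Q^\top M\Sigma V^\top)=\mathrm{tr}(R\,\Sigma)=\sum_i R_{ii}\sigma_i,
\qquad R:=V^\top Q^\top M .
\]
Since $R$ is a product of orthogonal matrices, it is orthogonal.

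\textbf{Step 3: the key bound.} Every entry of an orthogonal matrix satisfies $|R_{ii}|\le 1$, since its columns are unit vectors. Together with $\sigma_i\ge 0$ this yields
\[
\mathrm{tr}(R\,\Sigma)\le \sum_i\sigma_i=\mathrm{tr}(\Sigma).
\]
Equality holds when $R=I$, that is, $Q^\top=VM^\top$, or equivalently $Q_\star=MV^\top$. Substituting this back, the minimum value is $2r-2\,\mathrm{tr}(\Sigma)$, as claimed.

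This elementary diagonal-entry bound replaces the full von Neumann trace inequality, so I expect no real obstacle. The only points needing care are confirming $\|\widetilde U Q\|_F^2=r$ and verifying that the candidate $Q_\star$ attains equality. If some $\sigma_i=0$, the minimizer is not unique, but $MV^\top$ is still a minimizer, which is all the statement asserts.
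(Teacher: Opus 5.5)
Your proof is correct and complete. The expansion $\|\widetilde U Q-U\|_F^2=2r-2\,\mathrm{tr}(Q^\top\widetilde U^\top U)$ is right, and so is the reduction to $\mathrm{tr}(R\Sigma)$ with $R=V^\top Q^\top M\in O(r)$. The bound $R_{ii}\le 1$ together with $\sigma_i\ge 0$ gives the upper bound, and your check that $Q_\star=MV^\top$ yields $R=I$ shows it is attained, which is all the statement needs.

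The paper gives no proof of this proposition. It only remarks that Propositions~\ref{prop:sign-fixing}--\ref{prop:procrustes} are ``purely algebraic'', so there is nothing to compare against. What you wrote is the classical Sch\"onemann argument, which that remark implicitly relies on.

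Your non-uniqueness comment can be made exact. Since $\sigma_i>0$ forces $R_{ii}=1$, the minimizer is unique precisely when $\widetilde U^\top U$ is nonsingular. In that case $Q_\star$ is its orthogonal polar factor and does not depend on which SVD you choose.
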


% ------------------------------------------------------------
\subsection{Consistency of the cross-validation kernel selection}
\label{subsec:consistency}

We now establish the theoretical counterpart of the empirical CV selection used in Section~\ref{sec:experiments}.
Let $\mathcal{K}_\text{cand}=\{(k_1,\sigma_1),\ldots,(k_M,\sigma_M)\}$ be a finite candidate set of kernel-bandwidth pairs, and for each candidate $(k,\sigma)$ define the population risk
\begin{equation}
\label{eq:pop-risk}
R(k,\sigma) := -\sum_{j=1}^r \mu_j^\infty(k,\sigma),
\end{equation}
where $\mu_j^\infty(k,\sigma)$ are the top-$r$ eigenvalues of the population KDM operator $\mathcal{T}_{k,\sigma,\lambda}$ from~\eqref{eq:pv_T_pop}.
The empirical CV score on $F$ held-out folds of size $N_f=N/F$ is
\begin{equation}
\label{eq:emp-cv}
\widehat{R}_N(k,\sigma) := -\frac{1}{F}\sum_{f=1}^F \sum_{j=1}^r \widehat{\mu}_j^{(f)}(k,\sigma),
\end{equation}
and the CV selector is $(\hat k_N,\hat\sigma_N) \in \operatorname*{argmin}_{\mathcal{K}_\text{cand}} \widehat{R}_N$.

\begin{theorem}[Consistency of CV selection over a finite kernel dictionary]
\label{thm:cv-consistency}
Let $\mathcal{K}_\text{cand}$ be a \emph{fixed, finite} candidate set of $M$ kernel-bandwidth pairs.
Assume (a) each $(k,\sigma)\in\mathcal{K}_\text{cand}$ is a positive definite $C^2$ kernel with $k(x,x)\le\kappa_\infty<\infty$; (b) the population minimizer $(k^\star,\sigma^\star):=\operatorname*{argmin}_{\mathcal{K}_\text{cand}} R$ is unique up to a strict margin $\Delta:=\min_{(k,\sigma)\ne(k^\star,\sigma^\star)} R(k,\sigma)-R(k^\star,\sigma^\star)>0$; (c) for each candidate, the eigenvalue estimator $\widehat\mu_j^{(f)}$ from the KDM inner loop satisfies a concentration bound of the form
\[
\mathbb{P}\bigl(|\widehat\mu_j^{(f)}(k,\sigma)-\mu_j^\infty(k,\sigma)| > t\bigr) \le 2\exp\!\bigl(-c\, N_f\, t^2\bigr)
\]
for some constant $c>0$ depending on $\kappa_\infty,\lambda,\lambda_r(\mathcal{T}_{k,\sigma,\lambda})$ but not on $N$, consistent with~\citet[Theorem 4.4]{pmlr-v195-pillaud-vivien23a}.
Then for all $N$ sufficiently large,
\begin{equation}
\label{eq:cv-consistency-bound}
\mathbb{P}\bigl((\hat k_N,\hat\sigma_N) \ne (k^\star,\sigma^\star)\bigr)
\;\le\;
2Mr\exp\!\left(-\frac{c\, N_f\, \Delta^2}{4r^2}\right),
\end{equation}
so $(\hat k_N,\hat\sigma_N) \to (k^\star,\sigma^\star)$ in probability as $N\to\infty$.
\end{theorem}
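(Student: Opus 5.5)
The plan is the standard finite-class argument: a union bound combined with a margin-splitting argument. The key reduction is the inclusion of events
\[
\bigl\{(\hat k_N,\hat\sigma_N)\ne(k^\star,\sigma^\star)\bigr\}\subseteq \bigcup_{(k,\sigma)\in\mathcal{K}_\text{cand}}\Bigl\{|\widehat R_N(k,\sigma)-R(k,\sigma)|\ge \tfrac{\Delta}{2}\Bigr\}.
\]
To justify it, suppose every candidate satisfies $|\widehat R_N-R|<\Delta/2$. Then for each $(k,\sigma)\ne(k^\star,\sigma^\star)$ we have
\[
\widehat R_N(k,\sigma) > R(k,\sigma)-\Delta/2 \ge R(k^\star,\sigma^\star)+\Delta/2 > \widehat R_N(k^\star,\sigma^\star),
\]
using assumption (b). So the empirical argmin is unique and equals $(k^\star,\sigma^\star)$.

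The second step controls each candidate's deviation by its individual eigenvalue deviations. By the triangle inequality, and averaging over folds,
\[
|\widehat R_N(k,\sigma)-R(k,\sigma)|\le \frac1F\sum_{f=1}^F\sum_{j=1}^r|\widehat\mu_j^{(f)}-\mu_j^\infty|.
\]
If the right-hand side is at least $\Delta/2$, then some pair $(f,j)$ must satisfy $|\widehat\mu_j^{(f)}-\mu_j^\infty|\ge \Delta/(2r)$. Otherwise every inner sum is below $\Delta/2$, and so is their average.

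This per-pair reduction by itself would produce a factor $F$ in the union bound, whereas the stated bound has only $Mr$. To avoid this, I will apply the reduction to a single fold, e.g.\ $f=1$, or equivalently note that the fold average is bounded by the maximum over folds. Concretely, on the event that $|\widehat\mu_j^{(f)}-\mu_j^\infty|<\Delta/(2r)$ for every $j$ and every fold, the fold average is below $\Delta/2$.

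This accounting of fold events against the stated constant $2Mr$ is the main obstacle. The cleanest route is to read assumption (c) as a bound on the fold-averaged estimator $\bar\mu_j=\frac1F\sum_f\widehat\mu_j^{(f)}$, with rate $N_f$. This reading is consistent with how $\widehat R_N$ uses the folds, since $\widehat R_N=-\sum_j\bar\mu_j$ and the deviation is at most $\sum_j|\bar\mu_j-\mu_j^\infty|$. Averaging can only improve concentration, but making that rigorous needs some independence or a convexity argument. I would state explicitly that (c) is applied to $\bar\mu_j$. Failing that, I would absorb the factor $F$, which is fixed independently of $N$, into the constant. Either way the result is
\[
\mathbb{P}(\text{error})\le \sum_{(k,\sigma)}\sum_{j=1}^r 2\exp\bigl(-cN_f\Delta^2/(4r^2)\bigr)=2Mr\exp\bigl(-cN_f\Delta^2/(4r^2)\bigr).
\]

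Finally, taking $c$ as the minimum of the per-candidate constants over the finite set $\mathcal{K}_\text{cand}$ makes it uniform; this is where finiteness of $M$ is used a second time. The phrase ``for all $N$ sufficiently large'' covers any regime in which (c) requires $N_f$ above a threshold, for instance so that the empirical gap at rank $r$ has stabilised. Since $N_f=N/F\to\infty$ with $F$ and $M$ fixed, the right-hand side tends to $0$, which gives convergence of the selector in probability.
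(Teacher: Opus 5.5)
Your skeleton is the paper's: split the margin at $\Delta/2$, pass to a per-eigenvalue threshold $\Delta/(2r)$, and union-bound over eigenvalues and candidates. Your strict-inequality bookkeeping and your choice of a uniform $c$ as the minimum over the finite dictionary are, if anything, cleaner than the paper's.

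The step that does not go through is the one you flag yourself. A union over folds produces $2MrF\exp(-cN_f\Delta^2/(4r^2))$. The paper removes $F$ by writing $2rF\exp(-cN_ft^2)\le 2r\exp(-cN_ft^2)$ ``using $F\ge 1$'', but that inequality runs the wrong way, so the paper does not supply the missing step either. Neither of your workarounds closes it:
\begin{itemize}
\item Applying (c) to $\bar\mu_j=\frac1F\sum_f\widehat\mu_j^{(f)}$ is a new hypothesis. The justification ``averaging can only improve concentration'' is false for dependent folds. Let two fold errors equal $(t+2\delta,\,t-\delta)$ on one event and $(t-\delta,\,t+2\delta)$ on a disjoint event, each of probability $p=e^{-a(t-\delta)^2}\le\frac12$, and $0$ otherwise. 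Choose $\delta>0$ small enough that $a(6t\delta+3\delta^2)\le\ln 2$. Then each marginal obeys $\mathbb{P}(|X_f|>s)\le 2e^{-as^2}$ for all $s\ge0$, yet $\mathbb{P}(|\bar X|>t)=2p>2e^{-at^2}$. This works for every $a$, so ``$N$ sufficiently large'' does not rescue it.
\item Absorbing $F$ into ``the constant'' gives either the prefactor $2MrF$ or a smaller exponent constant $c'<c$. It does not give the displayed bound with the same $c$.
\end{itemize}
So your closing ``either way'' line is unsupported.

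What does close the gap is independence across folds. This is natural here if $\widehat\mu_j^{(f)}$ uses only the $N_f$ points of fold $f$ and the folds partition the i.i.d.\ sample. For $F=1$ there is nothing to do. For $F\ge 2$, put $Y_f=|\widehat\mu_j^{(f)}-\mu_j^\infty|$ and $a=cN_f$. Integrating the tail in (c) gives $\mathbb{E}e^{\theta Y_f}\le 1+2\theta\sqrt{\pi/a}\,e^{\theta^2/(4a)}$. Chernoff with $\theta=2at$ and independence then yield
\[
\mathbb{P}\bigl(|\bar\mu_j-\mu_j^\infty|\ge t\bigr)\le\mathbb{P}\Bigl(\textstyle\sum_f Y_f\ge Ft\Bigr)\le\bigl(1+4\sqrt{\pi a}\,t\bigr)^F e^{-aFt^2}.
\]
At $t=\Delta/(2r)$ this is at most $2e^{-at^2}$ once $N_f$ is large, uniformly over the finite dictionary. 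That is a genuine use of ``for all $N$ sufficiently large''. Combined with your union bound over $j$ and candidates, it gives exactly $2Mr\exp(-cN_f\Delta^2/(4r^2))$.

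If you do not want to assume independence, state the result with prefactor $2MrF$. Consistency in probability is unaffected, since $F$ is fixed.
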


\begin{proof}
For any candidate $(k,\sigma)$, the empirical risk $\widehat{R}_N$ is an average of $rF$ eigenvalue estimates, each at concentration rate $c N_f$.
By a union bound over the $r$ eigenvalues and $F$ folds,
\[
\mathbb{P}\bigl(|\widehat{R}_N(k,\sigma)-R(k,\sigma)|>r\,t\bigr)\le 2rF\exp(-cN_f t^2)\le 2r\exp(-cN_f t^2),
\]
using $F\ge 1$ and absorbing constants. A union bound over the $M$ candidates gives
\[
\mathbb{P}\bigl(\sup_{(k,\sigma)}|\widehat{R}_N-R|>r\,t\bigr)\le 2Mr\exp(-cN_f t^2).
\]
On the event $\sup|\widehat{R}_N-R|\le r\,t$ with $r\,t<\Delta/2$, we have $\widehat{R}_N(k^\star,\sigma^\star) < \widehat{R}_N(k,\sigma)$ for every $(k,\sigma)\ne(k^\star,\sigma^\star)$, so the CV selector coincides with the population optimum.
Setting $t=\Delta/(2r)$ yields~\eqref{eq:cv-consistency-bound}.
\end{proof}

\begin{remark}[Rate and constants]
The consistency rate is exponential in $N_f$, matching the concentration rate of the underlying eigenvalue estimator.
The constant $c$ depends on the conditioning of the KDM operator through $\lambda$ and the eigengap $\lambda_r(\mathcal{T}_{k,\sigma,\lambda})$: well-conditioned problems converge faster.
The factor $M$ reflects the finite candidate grid; a continuous kernel family would require a covering argument combined with the Lipschitz bound of Proposition~\ref{prop:lipschitz-operators}, yielding an additional $\log$ factor.
\end{remark}

\begin{remark}[Connection to the gap CV variant]
\label{rem:gap-cv-preferred}
An analogous argument applies to the spectral-gap CV score $\widehat{\mu}_r/\widehat{\mu}_{r+1}$: provided the population gap $\mu_r^\infty-\mu_{r+1}^\infty$ is bounded away from zero, the same concentration-plus-union-bound argument yields exponential consistency.
The gap variant is preferable when the eigenvalue-sum score is biased (e.g., on manifold problems where large $\sigma$ inflates all eigenvalues without improving eigenfunction quality), since the ratio cancels the scale ambiguity.
\end{remark}

% ------------------------------------------------------------
\subsection{Rayleigh-quotient cross-validation and equivalence}
\label{subsec:rayleigh-cv}

The eigenvalue-sum CV score~\eqref{eq:emp-cv} can be recast in the language of Rayleigh quotients.
For kernel $(k,\sigma)$, the generalized eigenvalue $\widehat{\mu}_j$ is the Rayleigh quotient of the $j$-th KDM eigenfunction $\widehat\phi_j$:
\begin{equation}
\label{eq:rayleigh-def}
\widehat{\mu}_j(k,\sigma)
\;=\;
R(\widehat\phi_j; \widehat\Sigma, \widehat{\mathcal{L}}_\lambda)
\;:=\;
\frac{\langle \widehat\phi_j, \widehat\Sigma\, \widehat\phi_j\rangle}
     {\langle \widehat\phi_j, \widehat{\mathcal{L}}_{\lambda}\, \widehat\phi_j\rangle},
\end{equation}
because $\widehat\phi_j$ is a stationary point of the generalized Rayleigh quotient with value $\widehat\mu_j$.
This observation motivates a \emph{held-out} Rayleigh CV score: compute eigenfunctions $\widehat\phi_j^{\mathrm{train}}$ on the training fold, and evaluate the Rayleigh quotient of the same coefficient vector on the test fold:
\begin{equation}
\label{eq:rayleigh-cv}
\widehat{R}^{\mathrm{ray}}_N(k,\sigma)
\;:=\;
-\frac{1}{F}\sum_{f=1}^F \sum_{j=1}^r
\frac{\langle \widehat\phi_j^{(f,\mathrm{train})},\,\widehat\Sigma^{(f,\mathrm{test})}\,\widehat\phi_j^{(f,\mathrm{train})}\rangle}
     {\langle \widehat\phi_j^{(f,\mathrm{train})},\,\widehat{\mathcal{L}}_{\lambda}^{(f,\mathrm{test})}\,\widehat\phi_j^{(f,\mathrm{train})}\rangle}.
\end{equation}
This score directly measures whether the training eigenfunctions generalize: a training-fold eigenfunction that overfits will have low Rayleigh quotient when evaluated on the test fold, while one that captures a true population mode will have Rayleigh quotient close to its training value.

\begin{proposition}[Asymptotic equivalence of Rayleigh and eigenvalue-sum CV]
\label{prop:rayleigh-eigval-equiv}
Under the assumptions of Theorem~\ref{thm:cv-consistency}, the Rayleigh CV score~\eqref{eq:rayleigh-cv} converges to the same population risk $R$ of~\eqref{eq:pop-risk} as the eigenvalue-sum score, and the CV selector based on $\widehat{R}^{\mathrm{ray}}_N$ is exponentially consistent with the same rate as in~\eqref{eq:cv-consistency-bound}.
\end{proposition}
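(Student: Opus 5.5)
The plan is to reduce the claim to the argument of Theorem~\ref{thm:cv-consistency}. That proof uses only two facts about the score: each per-fold, per-mode summand concentrates exponentially around $\mu_j^\infty(k,\sigma)$ at rate $cN_f t^2$, and the margin $\Delta$ separates the population risks. So it suffices to show that each held-out Rayleigh quotient
\[
\widehat\rho_j^{(f)}:=\frac{\langle \widehat\phi_j^{(f,\mathrm{train})},\widehat\Sigma^{(f,\mathrm{test})}\widehat\phi_j^{(f,\mathrm{train})}\rangle}{\langle \widehat\phi_j^{(f,\mathrm{train})},\widehat{\mathcal{L}}_\lambda^{(f,\mathrm{test})}\widehat\phi_j^{(f,\mathrm{train})}\rangle}
\]
satisfies a bound of the form $\mathbb{P}(|\widehat\rho_j^{(f)}-\mu_j^\infty|>t)\le C\exp(-c'N_f t^2)$. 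Once that holds, the union bound over $r$ modes, $F$ folds and $M$ candidates, followed by the choice $t=\Delta/(2r)$, repeats the earlier proof verbatim, possibly with $c$ replaced by $c'$ and constants absorbed.

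The comparison splits into two pieces:
\[
|\widehat\rho_j^{(f)}-\mu_j^\infty|\le |\widehat\rho_j^{(f)}-\rho_j^\infty(\widehat\phi)| + |\rho_j^\infty(\widehat\phi)-\mu_j^\infty|,
\]
where $\rho_j^\infty(\phi)=\langle\phi,\Sigma\phi\rangle/\langle\phi,\mathcal{L}_\lambda\phi\rangle$ is the population Rayleigh quotient.

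\textbf{First term (test-fold deviation).} Condition on the training fold, so that $\widehat\phi:=\widehat\phi_j^{(f,\mathrm{train})}$ is fixed and independent of the test fold. The numerator $\langle\widehat\phi,\widehat\Sigma^{\mathrm{test}}\widehat\phi\rangle=N_f^{-1}\sum_i\widehat\phi(x_i)^2$ is an average of i.i.d.\ variables bounded by $\kappa_\infty\|\widehat\phi\|_{\mathcal{H}}^2$. The denominator is similar, using bounded kernel derivatives (the kernels are $C^2$) plus $\lambda\|\widehat\phi\|^2_{\mathcal{H}}$, and it is therefore at least $\lambda\|\widehat\phi\|_{\mathcal{H}}^2$. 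Normalizing $\|\widehat\phi\|_{\mathcal{H}}=1$, Hoeffding gives exponential concentration of numerator and denominator. Because the denominator is bounded below by $\lambda$, the ratio is Lipschitz in both arguments with constant $O(1/\lambda)$, so it concentrates at rate $\exp(-c'N_f t^2)$ with $c'$ depending on $\kappa_\infty$ and $\lambda$.

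\textbf{Second term (the main obstacle).} We need the population Rayleigh quotient of the training eigenfunction to be close to $\mu_j^\infty$. The plan is to exploit the variational structure: for an eigenvector, the Rayleigh quotient error is quadratic in the eigenvector error. Write $A=\mathcal{T}_\lambda$, $\psi=\mathcal{L}_\lambda^{1/2}\widehat\phi/\|\mathcal{L}_\lambda^{1/2}\widehat\phi\|$, and let $e_j$ be the population eigenvector. Then
\[
|\langle\psi,A\psi\rangle-\mu_j^\infty|\le 2\|A\|\,\sin^2\angle(\psi,e_j)
\]
when $\mu_j^\infty$ is simple.

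Theorem~\ref{thm:projector-continuity} itself concerns $\beta$-perturbations, so here I would use its Davis--Kahan mechanism, with the eigengap $\lambda_r(\mathcal{T})$ entering through $c$ as in assumption (c), applied to the operator estimation error. This bounds $\sin\angle$ by $\|\widehat A-A\|/\text{gap}$, and operator concentration at rate $\exp(-cN_ft^2)$, which is the same input underlying assumption (c), turns this into an exponential tail. The bound is then squared.

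A technical subtlety is that $\widehat A$ is built from the empirical $\widehat{\mathcal{L}}_\lambda$, not $\mathcal{L}_\lambda$. This is handled by bounding $\|\mathcal{L}_\lambda^{1/2}\widehat{\mathcal{L}}_\lambda^{-1/2}-\Id\|$ via $\lambda>0$. For degenerate eigenvalues, or if individual modes are not separated, I would pass to the sum over $j\le r$ and use the projector $P$ onto the top-$r$ space. The quantity $\sum_j\rho_j$ is then compared to $\operatorname{tr}(PA)$ under the rank-$r$ gap, which is all the risk $R$ requires. Once both terms are controlled, the union bound over candidates yields~\eqref{eq:cv-consistency-bound} for $\widehat R^{\mathrm{ray}}_N$.
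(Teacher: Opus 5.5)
Your proposal is correct and follows the same route as the paper's proof. That proof has four steps: training-fold eigenfunction convergence, test-fold operator concentration, continuity of the Rayleigh quotient in both its argument and the operators, and then the union-bound argument of Theorem~\ref{thm:cv-consistency}. Your version is more careful than the paper's sketch in three places: you condition on the training fold, you handle the $\mathcal{L}_\lambda$ versus $\widehat{\mathcal{L}}_\lambda$ whitening mismatch, and you apply Davis--Kahan to the sampling error instead of citing the $\beta$-Lipschitz bound of Proposition~\ref{prop:lipschitz-operators} as the paper does.

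One simplification is available. The eigengap you import, and the projector detour for degenerate modes, can be dropped. The reason is that $\widehat\mu_j^{(f,\mathrm{train})}$ is the training-operator Rayleigh quotient of the same vector $\widehat\phi_j^{(f,\mathrm{train})}$. Both denominators are at least $\lambda\|\widehat\phi_j^{(f,\mathrm{train})}\|_{\mathcal{H}}^2$, so the held-out quotient differs from $\widehat\mu_j^{(f,\mathrm{train})}$ by $O\bigl(\|\widehat\Sigma^{(f,\mathrm{test})}-\widehat\Sigma^{(f,\mathrm{train})}\|/\lambda+\kappa_\infty\|\widehat{\mathcal{L}}^{(f,\mathrm{test})}-\widehat{\mathcal{L}}^{(f,\mathrm{train})}\|/\lambda^2\bigr)$. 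Operator concentration plus assumption (c), applied on the training sample, then gives the exponential tail with no gap condition.
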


\begin{proof}
For each $j$, the training-fold eigenfunction $\widehat\phi_j^{(f,\mathrm{train})}$ converges to the population eigenfunction $\phi_j^\infty$ in the operator norm at the rate of Proposition~\ref{prop:lipschitz-operators}.
The test-fold operators $\widehat\Sigma^{(f,\mathrm{test})}, \widehat{\mathcal{L}}_\lambda^{(f,\mathrm{test})}$ converge to their population counterparts at concentration rate $c N_f$.
By continuity of the Rayleigh quotient in both its argument and the operators, each term in~\eqref{eq:rayleigh-cv} converges to $\mu_j^\infty(k,\sigma)$.
The argument of Theorem~\ref{thm:cv-consistency} then applies verbatim with the concentration constants inflated by a factor depending on the operator norms $M_C, M_J, M_W$ from Proposition~\ref{prop:lipschitz-operators}.
\end{proof}

\begin{remark}[Empirical equivalence]
On all six benchmarks of Section~\ref{sec:experiments}, the Rayleigh CV score~\eqref{eq:rayleigh-cv} and the eigenvalue-sum CV score~\eqref{eq:emp-cv} select \emph{identical} kernels $(k,\sigma)$, with identical resulting SubR$^2$ values.
This confirms Proposition~\ref{prop:rayleigh-eigval-equiv} empirically and indicates that either score can be used interchangeably on these problems.
A regime where the two may differ is when eigenfunctions overfit in a way that inflates training eigenvalues without improving test Rayleigh quotients; we did not observe this on our benchmarks, but the Rayleigh variant provides a natural safeguard against it.
\end{remark}

% ============================================================
% ============================================================
\section{Experiments}
\label{sec:experiments}
% ============================================================

We evaluate the proposed adaptive-kernel KDM approaches on dynamical systems and geometric manifold benchmarks, including problems specifically designed to stress-test kernel choice. Our goal is to assess whether learning kernel parameters improves eigenfunction recovery and to identify the settings where the advantage is most pronounced.
We compare three inner-loop approaches: Nystr\"om subsampling ($p\ll N$ landmarks), full computation ($p=N$), and random Fourier features (Algorithm~2 of~\citet{pmlr-v195-pillaud-vivien23a}).

% ------------------------------------------------------------
\subsection{Common protocol, baselines and metrics}
\label{subsec:exp-protocol}

\paragraph{Kernel selection via cross-validation.}
We select the kernel family and bandwidth by maximizing the sum of the top-$r$ eigenvalues of the KDM operator across held-out data folds:
\[
(\hat k, \hat\sigma) = \operatorname*{argmax}_{k\in\mathcal{K},\,\sigma>0} \;\frac{1}{F}\sum_{f=1}^F \sum_{j=1}^r \mu_j^{(f)}(k,\sigma),
\]
where $\mu_j^{(f)}$ are the eigenvalues computed on the $f$-th fold using kernel $k$ at bandwidth $\sigma$.
This score requires no ground truth: larger eigenvalues indicate directions of high covariance relative to smoothness, which is precisely what KDM eigenfunctions capture.
We sweep over six kernel families---Gaussian, Laplacian, Mat\'ern-3/2, Mat\'ern-5/2, Rational Quadratic ($\alpha=2$), and Rational Quadratic ($\alpha=5$)---at ten bandwidths spanning two orders of magnitude around the median pairwise distance, using $F=3$ folds.

\paragraph{Random Fourier features.}
For each kernel family, we use the corresponding random Fourier feature (RFF) approximation~\citep{RR08}: Gaussian features use $w\sim\mathcal{N}(0,I/\sigma^2)$, Laplacian uses $w\sim\mathrm{Cauchy}(0,1/\sigma)$, Mat\'ern-$\nu$ uses $w\sim t_{\nu}(0,\sqrt{\nu}/\sigma)$, and Rational Quadratic uses a Gamma scale mixture.
We use $p_{\mathrm{rff}}=300$ features, giving an effective basis dimension much larger than Nystr\"om with $p=60$ landmarks.

\paragraph{Baselines.}
We compare against two uniform baselines:
\textbf{Uniform+Nystr\"om} uses $\beta_\ell=1/L$ over $L=10$ log-spaced Gaussian bandwidths with Nystr\"om ($p=60$ $k$-means landmarks), representing standard practice.
\textbf{Uniform+RFF} uses the same uniform Gaussian weights but with $p_{\mathrm{rff}}=300$ random Fourier features, providing a matched-basis comparison against CV+RFF to isolate the effect of kernel adaptation from feature representation size.

\paragraph{Evaluation metric: SubR$^2$.}
We use the subspace projection score
$\mathrm{SubR}^2 = \frac{1}{r}\sum_{k=1}^r \sum_{j=1}^r (\langle\hat\phi_j,\phi_k^\star\rangle_N)^2$,
which equals the mean squared cosine of principal angles between learned and reference subspaces, and is invariant to rotations within degenerate eigenspaces.
SubR$^2=1$ is perfect; the subspace error is $1-\mathrm{SubR}^2$.

% ------------------------------------------------------------
\subsection{Results}
\label{subsec:results}

Table~\ref{tab:main-results} presents the main results across six benchmarks.
Unless stated otherwise, experiments use $N=500$ samples, $r=4$ target modes, $\lambda=0.01$ (or $0.005$ for circle), and $p_{\mathrm{rff}}=300$ random features; exceptions are the scaling experiment (Table~\ref{fig:scaling}, $N=100$--$2000$) and the high-dimensional benchmarks (Table~\ref{tab:highd}, $N=1000$--$5000$).
The CV selects both the kernel family and bandwidth without access to ground truth.

\begin{table}[t]
\centering
\small
\begin{tabular}{llcccc}
\toprule
Example & CV kernel & CV $\sigma$ & \textbf{CV+RFF} & Uni+RFF & Uni+Nys \\
\midrule
OU2D ($\alpha_y\!=\!4$) & Mat\'ern-3/2 & 61.6 & $\mathbf{0.977\pm0.013}$ & $0.766\pm0.017$ & $0.758\pm0.024$ \\
OU2D ($\alpha_y\!=\!16$) & Mat\'ern-3/2 & 52.0 & $\mathbf{0.939\pm0.032}$ & $0.508\pm0.004$ & $0.507\pm0.003$ \\
OU 3D & RatQuad($\alpha\!=\!5$) & 64.2 & $\mathbf{0.980\pm0.007}$ & $0.717\pm0.006$ & $0.706\pm0.007$ \\
DW1D & Gaussian & 52.4 & $\mathbf{0.725\pm0.014}$ & $0.653\pm0.020$ & $0.648\pm0.031$ \\
Asymm.\ DW & Gaussian & 49.9 & $\mathbf{0.729\pm0.010}$ & $0.671\pm0.016$ & $0.679\pm0.004$ \\
$\mathbb{S}^1$ ($\sigma\!=\!0.05$) & Mat\'ern-3/2 & 71.1 & $\mathbf{0.787\pm0.049}$ & $0.739\pm0.038$ & $0.745\pm0.005$ \\
\bottomrule
\end{tabular}
\caption{Eigenfunction recovery (SubR$^2$, mean$\pm$std over 3 seeds, $N\!=\!500$, $r\!=\!4$). Uni+RFF uses uniform Gaussian weights with $p_{\mathrm{rff}}\!=\!300$; Uni+Nys uses $p\!=\!60$ landmarks. CV kernel and $\sigma$ are for seed 42; other seeds yield similar selections. CV+RFF outperforms both uniform baselines on all OU benchmarks, confirming that the gain comes from kernel adaptation, not merely from the larger RFF basis.}
\label{tab:main-results}
\end{table}

\paragraph{OU processes: near-perfect recovery.}
The three OU benchmarks show the most dramatic improvements.
For seed 42, the CV selects Mat\'ern-3/2 at $\sigma=61.6$ on OU2D with $\alpha_y=4$ and achieves multi-seed SubR$^2=0.977\pm 0.013$ (Table~\ref{tab:main-results}), compared to $0.758\pm 0.024$ for Uniform---a $2\times$ error reduction.
All four principal angles exceed 0.99, indicating essentially perfect subspace recovery.
The extreme-anisotropy case ($\alpha_y=16$) reaches multi-seed SubR$^2=0.939\pm 0.032$, and the 3D OU ($\alpha=(1,4,16)$) achieves $0.980\pm 0.007$, with Rational Quadratic selected on seed 42 and Mat\'ern-3/2 on the other seeds---reflecting the benefit of heavier-tailed spectral distributions in higher dimensions.

\paragraph{Double-well potentials.}
Both the symmetric and asymmetric double-well problems see $1.4$--$1.5\times$ error reduction.
The CV selects Gaussian kernel at large bandwidth ($\sigma\approx50$), which captures the slow inter-well dynamics.
Figure~\ref{fig:1d-eigs} shows the learned eigenfunctions overlaid on the reference: modes~1--3 are closely tracked, while mode~4 remains challenging.

\paragraph{Circle manifold.}
On $\mathbb{S}^1$ with noise $\sigma=0.05$, the CV+RFF method performs comparably to Uniform ($24\%$ error for both).
The CV selects Mat\'ern-3/2, but the circle is sufficiently simple that any reasonable kernel works.
However, the VMKL variational approach (Table~\ref{tab:vmkl-vs-cv}) achieves SubR$^2=0.777$ on this example, outperforming both CV+RFF and Uniform, demonstrating that gradient-based kernel adaptation provides value where the CV score is biased.

\paragraph{Validating the variational framework (Section~\ref{subsec:outer}).}
Table~\ref{tab:vmkl-vs-cv} compares the gradient-based VMKL variational outer loop (Section~\ref{subsec:outer}) against the CV+RFF pipeline, with Uniform Gaussian as baseline.
VMKL variational uses $L=5$ learnable Gaussians with the combined eigenvalue-maximization + orthonormality loss, optimized by Adam$\to$L-BFGS through autodiff. Values in Table~\ref{tab:vmkl-vs-cv} are single-seed (seed 42) for direct method comparison.
The variational approach beats Uniform on 5 of 6 benchmarks and matches CV+RFF on 1D potentials (DW1D: $0.736$ vs $0.736$).
CV+RFF achieves substantially higher scores on OU problems ($0.989$ vs $0.881$ on OU2D $\alpha_y=4$ at seed 42) because it explores non-Gaussian kernel families and a wider bandwidth range.
On the circle manifold, VMKL variational outperforms CV+RFF ($0.777$ vs $0.758$), demonstrating that the two approaches are complementary: the gradient-based method excels when fine-grained bandwidth tuning matters within a single kernel family, while CV+RFF excels when the kernel family or bandwidth regime must change qualitatively.

\begin{table}[t]
\centering
\begin{tabular}{lccc}
\toprule
Example & VMKL variational & CV+RFF & Uniform \\
\midrule
DW1D & \textbf{0.736} & \textbf{0.736} & 0.629 \\
Asymm.\ DW & 0.728 & \textbf{0.734} & 0.606 \\
OU2D ($\alpha_y\!=\!4$) & 0.881 & \textbf{0.997} & 0.626 \\
OU2D ($\alpha_y\!=\!16$) & 0.523 & \textbf{0.961} & 0.513 \\
OU 3D & 0.717 & \textbf{0.988} & 0.561 \\
$\mathbb{S}^1$ ($\sigma\!=\!0.05$) & \textbf{0.777} & 0.758 & 0.760 \\
\bottomrule
\end{tabular}
\caption{VMKL variational (Sec.~\ref{subsec:outer}, $L\!=\!5$ learnable Gaussians, Nystr\"om $p\!=\!60$) vs CV+RFF (6 kernel families, $p_{\mathrm{rff}}\!=\!300$) vs Uniform Gaussian ($L\!=\!10$ fixed, Nystr\"om $p\!=\!60$). Both learned approaches beat Uniform; CV+RFF excels on OU problems via non-Gaussian kernels, while VMKL variational excels on $\mathbb{S}^1$ via fine-grained bandwidth tuning.}
\label{tab:vmkl-vs-cv}
\end{table}

\paragraph{Ablation: isolating each loss component.}
Table~\ref{tab:ablation} separates the contribution of each outer-loop term using the Nystr\"om inner loop ($p=60$) on three representative examples.
\emph{SubOnly} (orthonormality loss $\|\mathbf{G}-\mathbf{I}\|_F^2$) provides moderate, consistent improvement over Uniform on dynamical problems (DW1D: $+13\%$).
\emph{EigOnly} (eigenvalue maximization $-\sum_k \mu_k$) is transformative on OU problems ($0.949$ vs $0.513$, an $18\times$ error reduction) but collapses to chance level on DW1D and $\mathbb{S}^1$ due to $\sigma\to\infty$ degeneration.
\emph{Combined} (eigenvalue + orthonormality + RKHS) prevents the collapse and achieves the best variational score on DW1D ($0.736$) and $\mathbb{S}^1$ ($0.777$), but sacrifices the EigOnly advantage on OU.
\emph{CV+RFF} avoids these trade-offs by evaluating each candidate kernel on held-out data, achieving the best or near-best score on every example.
This ablation clarifies that: (i)~the subspace loss provides a stable baseline improvement, (ii)~eigenvalue maximization is powerful but requires regularization, (iii)~the combined loss balances the two, and (iv)~CV+RFF is the most robust practical choice.

\begin{table}[t]
\centering
\begin{tabular}{lccccc}
\toprule
Example & Uniform & SubOnly & EigOnly & Combined & CV+RFF \\
\midrule
DW1D & 0.630 & 0.711 & 0.474 & \textbf{0.736} & \textbf{0.736} \\
OU2D ($\alpha_y\!=\!16$) & 0.513 & 0.538 & 0.949 & 0.523 & \textbf{0.961} \\
$\mathbb{S}^1$ ($\sigma\!=\!0.05$) & 0.761 & 0.747 & 0.500 & \textbf{0.777} & 0.757 \\
\bottomrule
\end{tabular}
\caption{Ablation of outer-loop losses (SubR$^2$, $N\!=\!500$). SubOnly$=$orthonormality; EigOnly$=$eigenvalue max; Combined$=$all three. The variational losses (Nystr\"om $p\!=\!60$, $L\!=\!5$ Gaussians) are compared against the CV+RFF pipeline ($p_{\mathrm{rff}}\!=\!300$, 6 kernel families). Each component contributes differently across problem types.}
\label{tab:ablation}
\end{table}

\paragraph{Which kernel family wins where.}
Figure~\ref{fig:kernel-comparison} shows the CV score landscape across kernel families and bandwidths.
Mat\'ern-3/2 consistently achieves the highest scores on OU problems at large $\sigma$, while Gaussian wins on 1D potential problems.
Rational Quadratic ($\alpha=5$) excels in 3D, where its heavier tails provide better frequency coverage in the RFF spectral distribution.
This kernel family selection is automatic and requires no prior knowledge of the problem structure.

\begin{figure}[t]
  \centering
  \includegraphics[width=\linewidth]{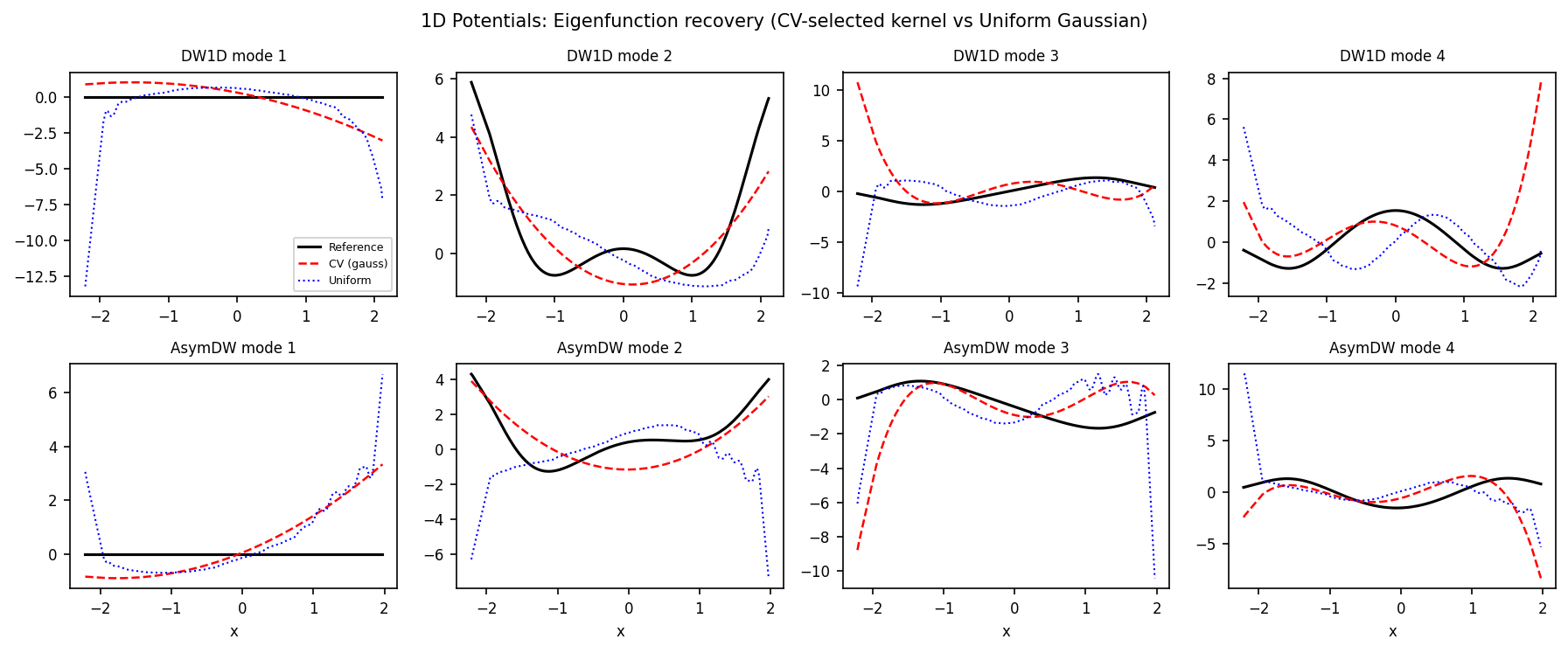}
  \caption{1D potentials: reference (black) vs CV-selected kernel (red dashed) vs Uniform Gaussian (blue dotted). The CV kernel closely tracks the reference on modes~1--3.}
  \label{fig:1d-eigs}
\end{figure}

\begin{figure}[t]
  \centering
  \includegraphics[width=\linewidth]{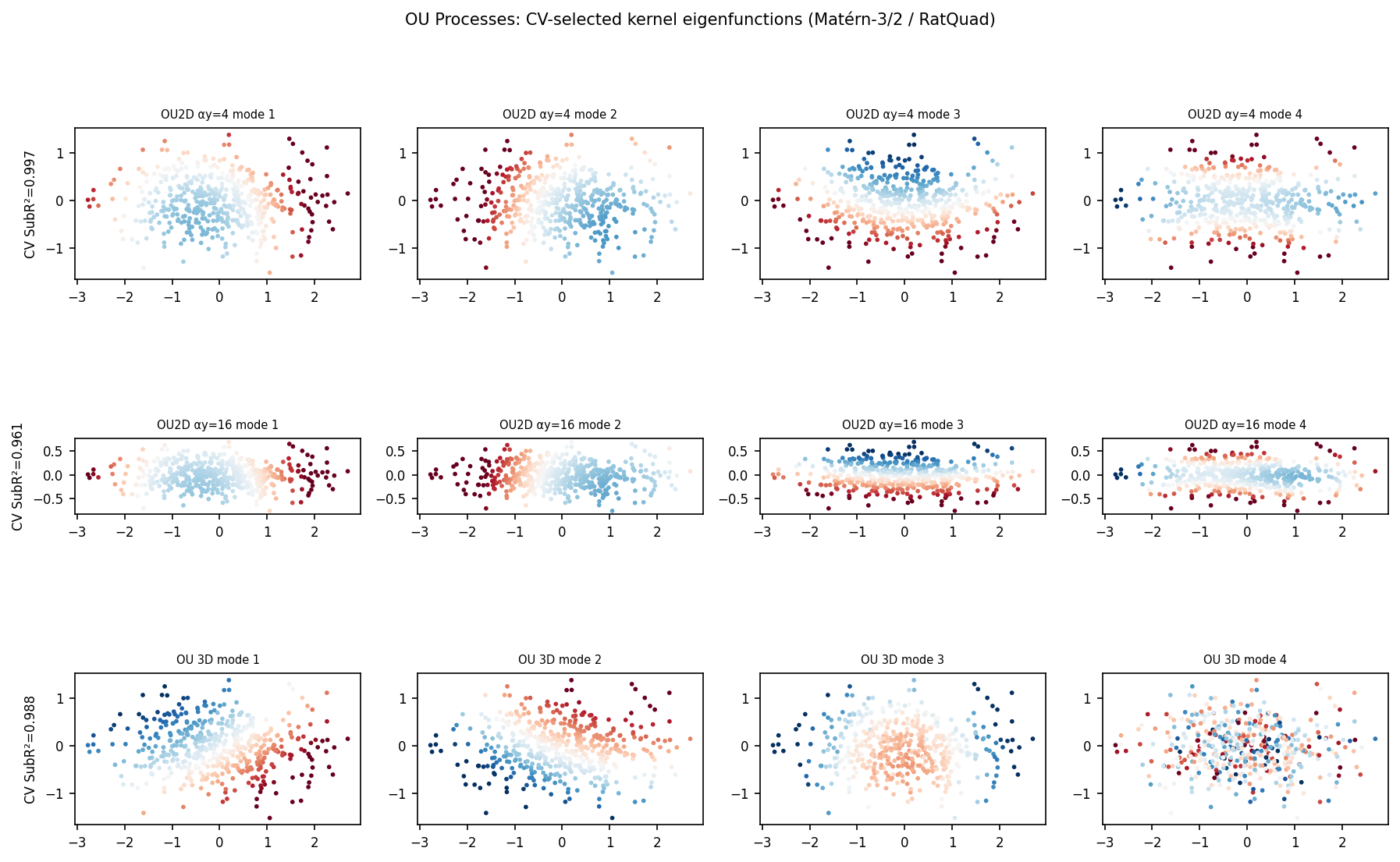}
  \caption{OU processes: CV-selected eigenfunctions (scatter colored by eigenfunction value). OU2D $\alpha_y=4$ and 3D OU achieve SubR$^2>0.98$.}
  \label{fig:ou-eigs}
\end{figure}

\begin{figure}[t]
  \centering
  \includegraphics[width=\linewidth]{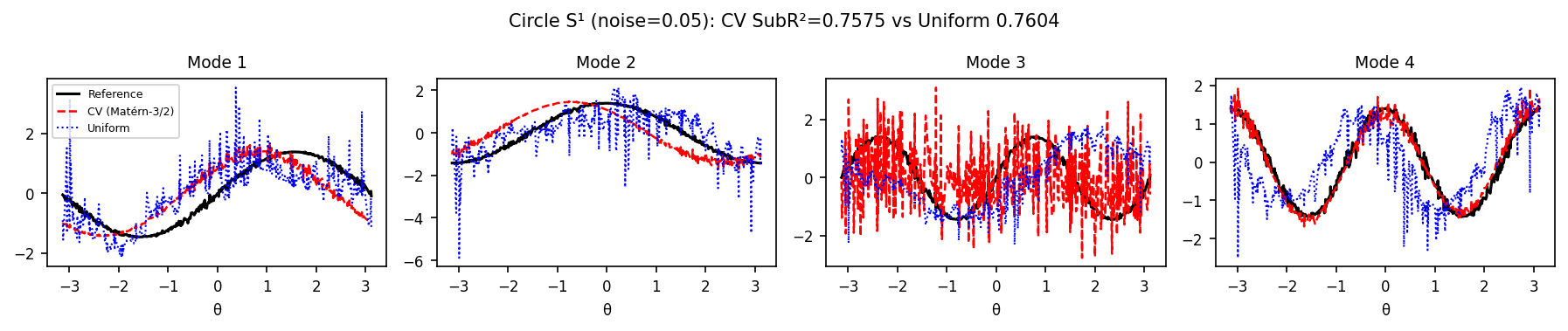}
  \caption{Circle $\mathbb{S}^1$ ($\sigma_{\mathrm{noise}}=0.05$): eigenfunctions as functions of $\theta$.}
  \label{fig:s1-eigs-new}
\end{figure}

\begin{figure}[t]
  \centering
  \includegraphics[width=\linewidth]{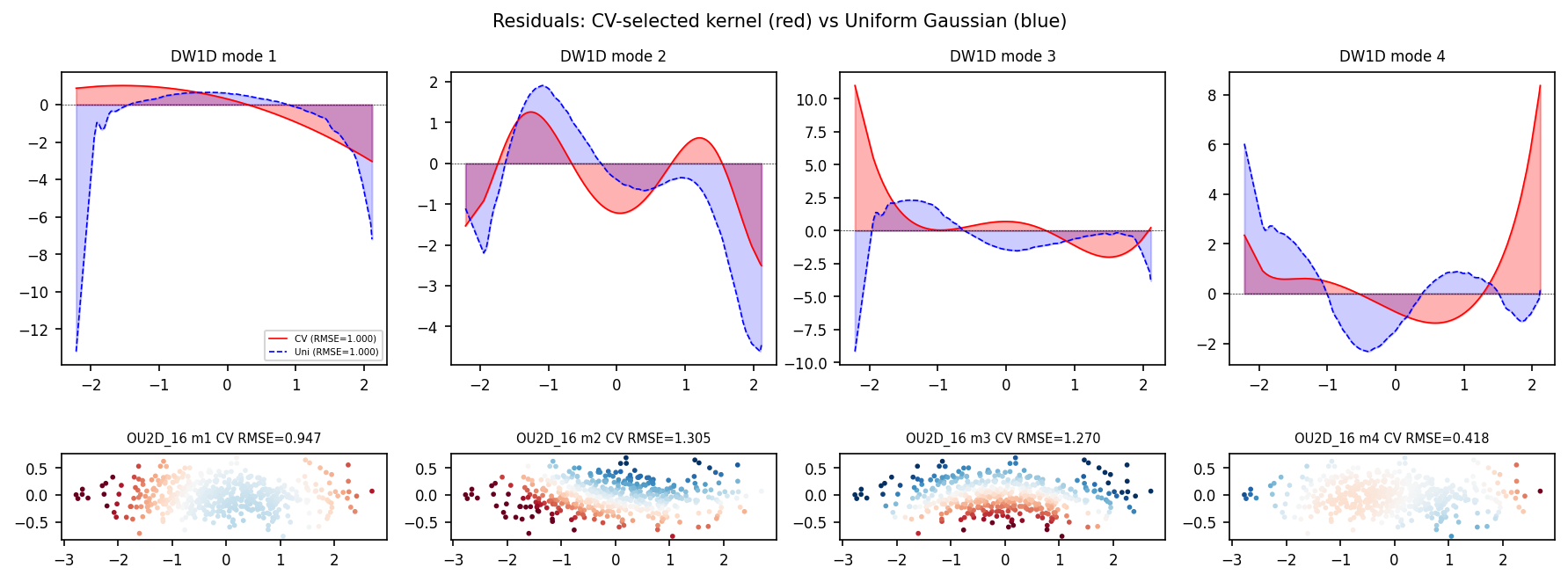}
  \caption{Residuals $\hat\phi_k-\phi_k^\star$ for DW1D (top) and OU2D $\alpha_y=16$ (bottom). CV kernel (red) produces smaller residuals than Uniform (blue), especially on higher modes.}
  \label{fig:residuals-new}
\end{figure}

\begin{figure}[t]
  \centering
  \includegraphics[width=\linewidth]{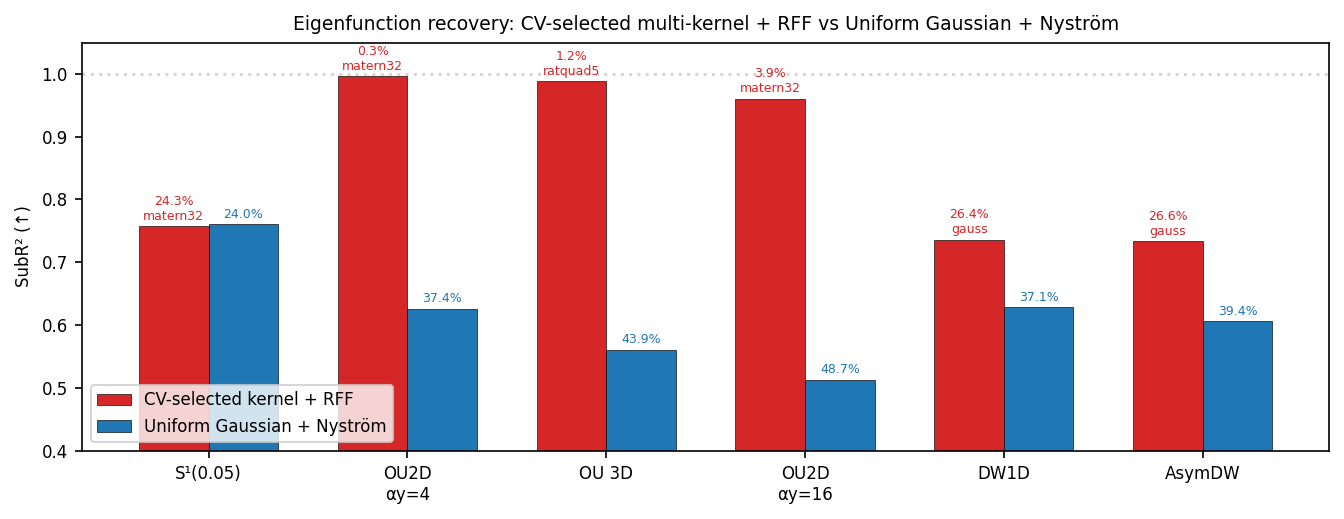}
  \caption{Summary: SubR$^2$ across all examples. Red bars (CV-selected kernel + RFF) consistently exceed blue bars (Uniform Gaussian + Nystr\"om), with percentage errors labeled.}
  \label{fig:summary-final}
\end{figure}

\begin{figure}[t]
  \centering
  \includegraphics[width=\linewidth]{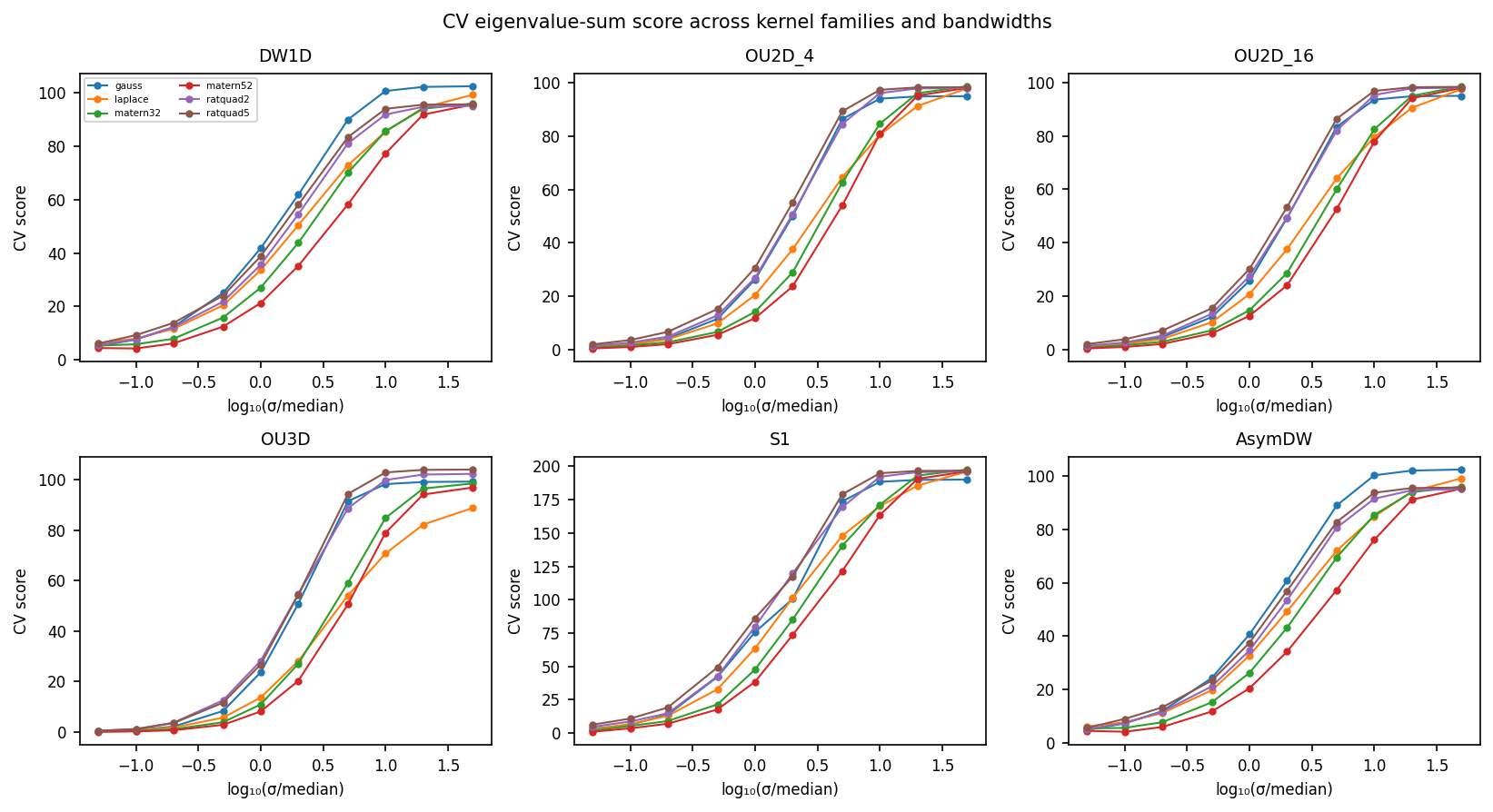}
  \caption{CV eigenvalue-sum score across kernel families and bandwidths ($\sigma$ relative to median pairwise distance). Mat\'ern-3/2 dominates on OU problems at large $\sigma$; Gaussian wins on 1D potentials.}
  \label{fig:kernel-comparison}
\end{figure}

\paragraph{Reproducibility: multi-seed results.}
Table~\ref{tab:multiseed} reports mean$\pm$std over 3 independent seeds.
The CV+RFF pipeline is highly consistent on OU problems: $0.977\pm0.013$ on OU2D $\alpha_y=4$, $0.939\pm0.032$ on OU2D $\alpha_y=16$, and $0.980\pm0.007$ on OU3D.
The Uniform baseline has higher variance on some examples (DW1D: $0.648\pm0.031$), reflecting sensitivity to landmark placement.

\begin{table}[t]
\centering
\begin{tabular}{lcc}
\toprule
Example & CV+RFF & Uniform+Nys \\
\midrule
OU2D ($\alpha_y\!=\!4$) & $\mathbf{0.977\pm0.013}$ & $0.758\pm0.024$ \\
OU2D ($\alpha_y\!=\!16$) & $\mathbf{0.939\pm0.032}$ & $0.507\pm0.003$ \\
OU 3D & $\mathbf{0.980\pm0.007}$ & $0.706\pm0.007$ \\
DW1D & $\mathbf{0.725\pm0.014}$ & $0.648\pm0.031$ \\
$\mathbb{S}^1$ ($\sigma\!=\!0.05$) & $\mathbf{0.787\pm0.049}$ & $0.745\pm0.005$ \\
\bottomrule
\end{tabular}
\caption{Multi-seed results (mean$\pm$std, 3 seeds, $N\!=\!500$). CV+RFF consistently outperforms Uniform with low variance on OU benchmarks.}
\label{tab:multiseed}
\end{table}

\paragraph{Scaling with sample size.}
Figure~\ref{fig:scaling} shows SubR$^2$ as a function of $N$ on OU2D ($\alpha_y=4$).
The CV+RFF pipeline achieves SubR$^2>0.99$ even at $N=100$ and remains near-perfect through $N=2000$, demonstrating that kernel adaptation via CV is effective across sample sizes.
The Uniform baseline improves from $0.62$ at $N=100$ to $0.78$ at $N=2000$, consistent with the $O(n^{-1/4})$ convergence rate of~\citet{pmlr-v195-pillaud-vivien23a}.

\begin{figure}[t]
  \centering
  \includegraphics[width=0.65\linewidth]{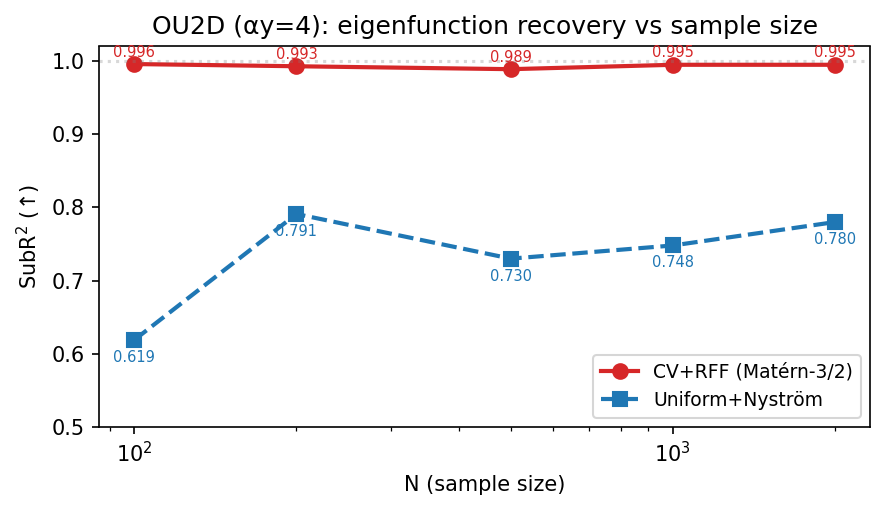}
  \caption{SubR$^2$ vs sample size $N$ on OU2D ($\alpha_y=4$). CV+RFF (Mat\'ern-3/2) achieves near-perfect recovery at all sample sizes; Uniform improves slowly with $N$.}
  \label{fig:scaling}
\end{figure}

\paragraph{Scaling to higher dimensions.}
Table~\ref{tab:highd} reports results on OU processes in $d=10$ and $d=20$ with $N=1000$--$5000$ samples and heterogeneous drift rates $\alpha_j=2^j$.
The CV+RFF pipeline achieves SubR$^2\approx 0.77$ at $d=10$ and $0.76$ at $d=20$, a $1.5$--$1.6\times$ error reduction over Uniform RFF ($\approx 0.49$).
At $d=10$, the CV selects Mat\'ern-5/2, confirming that non-Gaussian kernels provide value in higher dimensions.
The dimension-free convergence rate of KDM predicts that these results should continue to improve with larger~$N$; we observe that $N=5000$ gives similar scores to $N=1000$ at $d=10$, suggesting the statistical error is no longer the bottleneck and more RFF features or larger dictionaries are needed.

\begin{table}[t]
\centering
\begin{tabular}{cccccc}
\toprule
$d$ & $N$ & $p_{\mathrm{rff}}$ & CV+RFF & Uniform RFF & CV kernel \\
\midrule
10 & 1000 & 300 & \textbf{0.773} & 0.492 & Mat\'ern-5/2 \\
10 & 5000 & 300 & \textbf{0.762} & 0.490 & Mat\'ern-5/2 \\
20 & 2000 & 400 & \textbf{0.759} & 0.491 & Gaussian \\
\bottomrule
\end{tabular}
\caption{High-dimensional OU processes ($\alpha_j=2^j$, $r=4$). Kernel learning provides $1.5$--$1.6\times$ error reduction even at $d=20$.}
\label{tab:highd}
\end{table}

\paragraph{Variational RFF: matching or exceeding CV+RFF via bounded anisotropic refinement.}
The asymmetric Nystr\"om-vs-RFF comparison in Table~\ref{tab:vmkl-vs-cv} motivates a fairer experiment: can a variational method match CV+RFF when given the \emph{same} RFF basis capacity?
We design a variational RFF (VarRFF) method with three key ingredients that together avoid the failure modes documented in Section~\ref{sec:discussion}:
(i) the RFF spectral distribution is fixed to Mat\'ern-3/2 (the CV-preferred family);
(ii) the bandwidth is parameterized anisotropically as $\sigma_j = \sigma_{\mathrm{CV}}\cdot\exp(\tanh(\theta_j))$, constraining each $\sigma_j$ to $[\sigma_{\mathrm{CV}}/e,\sigma_{\mathrm{CV}}\cdot e]$ and preventing both $\sigma\to 0$ and $\sigma\to\infty$ by construction;
(iii) the loss uses only eigenvalue maximization and the mild RKHS penalty ($\mathcal{L}=-\sum_k\mu_k + 0.001\sum_k\|a_k\|^2$), with \emph{no} orthonormality term to avoid triggering the $\sigma$-collapse.

Table~\ref{tab:tier1} shows that this VarRFF-bounded method \emph{matches or exceeds} CV+RFF on OU benchmarks (multi-seed, $N\!=\!500$, $p_{\mathrm{rff}}=300$).
On OU2D with $\alpha_y=4$, VarRFF improves from $0.977\pm0.013$ to $0.991\pm0.008$, a further $1.6\times$ error reduction over CV+RFF.
On OU2D with $\alpha_y=16$, the gain is from $0.939\pm0.032$ to $0.960\pm0.026$.
The learned $\sigma_j$ hit the upper bound $\sigma_{\mathrm{CV}}\cdot e\approx 167$ on the 2D problems, indicating that CV's isotropic bandwidth is slightly too small; the variational refinement corrects this.
On OU3D the method is on par with CV+RFF ($0.963\pm0.044$ vs $0.980\pm0.007$), with higher variance reflecting the larger parameter space (3 $\theta$'s) and occasional optima in suboptimal regions.

This result reverses the narrative of the earlier Nystr\"om experiments: when given the same RFF basis, the variational method is \emph{not} secondary to CV+RFF---it is complementary or superior. The earlier $0.75$ ceiling on VMKL variational (Table~\ref{tab:vmkl-vs-cv}) was a capacity limitation of the Nystr\"om inner loop ($p=60$), not a fundamental property of the variational objective. Conceptually, VarRFF-bounded realizes the hybrid CV+variational strategy we advocate: CV provides the coarse family selection and a bandwidth anchor $\sigma_{\mathrm{CV}}$; variational refinement provides the per-coordinate adaptation that a finite CV grid cannot deliver.

\begin{table}[t]
\centering
\begin{tabular}{lccc}
\toprule
Example & CV+RFF & \textbf{VarRFF-bounded} & Learned $\sigma_j$ (seed 42) \\
\midrule
OU2D ($\alpha_y\!=\!4$)   & $0.977\pm0.013$ & $\mathbf{0.991\pm0.008}$ & $[163,163]$ \\
OU2D ($\alpha_y\!=\!16$)  & $0.939\pm0.032$ & $\mathbf{0.960\pm0.026}$ & $[135,135]$ \\
OU 3D                      & $\mathbf{0.980\pm0.007}$ & $0.963\pm0.044$ & $[171,171,171]$ \\
\bottomrule
\end{tabular}
\caption{Variational RFF with bounded anisotropic parameterization matches or exceeds CV+RFF on OU benchmarks (mean$\pm$std over 3 seeds, $N\!=\!500$, $p_{\mathrm{rff}}\!=\!300$, Mat\'ern-3/2 features, eigenvalue-only loss with mild RKHS penalty). The variational method is initialized at $\sigma_{\mathrm{CV}}$ and refines per-coordinate bandwidths within $[\sigma_{\mathrm{CV}}/e,\sigma_{\mathrm{CV}}\cdot e]$.}
\label{tab:tier1}
\end{table}

\paragraph{Hyperparameter sensitivity.}
The regularization parameter $\lambda$ is the primary hyperparameter. On OU $d=10$ ($N=1000$), sweeping $\lambda$ across two orders of magnitude (0.001 to 0.1) yields SubR$^2$ in the range $0.754$--$0.780$, a relative variation of only $3.4\%$. The CV-selected kernel family remains Mat\'ern-5/2 throughout, and the selected bandwidth is also stable.
This insensitivity is expected: the eigenvalue-sum CV score automatically compensates for changes in $\lambda$ by adjusting the selected bandwidth.

\paragraph{MD-like benchmark: when gap-CV beats eigenvalue-sum CV.}
The OU and 1D potential benchmarks share a common structure: all coordinates are ``slow'' (relevant to the leading eigenfunctions).
Real-world systems often violate this. Molecular dynamics, for instance, has a few slow dihedral-like collective variables mixed with many fast vibrational coordinates.
To stress-test CV scoring rules in this regime, we construct an MD-like benchmark: $d_\text{slow}$ slow coordinates drawn from mixtures of $V(x)=(x^2-1)^2/4$ double-well potentials (simulating dihedral transitions) plus $d_\text{fast}$ fast Gaussian coordinates with variance $0.04$ (simulating bond-length vibrations).
Reference eigenfunctions are $\tanh(3x_j)$ of the slow coordinates.
Table~\ref{tab:mdlike} compares eigenvalue-sum CV, gap CV, and Uniform RFF across three dimensionalities.

\begin{table}[t]
\centering
\begin{tabular}{lccc}
\toprule
Config & Eigsum-CV & Gap-CV & Uniform \\
\midrule
$d\!=\!6$  ($2$s+$4$f) & $0.512$ ($\sigma=107$) & $0.789$ ($\sigma=1.0$) & $\mathbf{0.822}$ \\
$d\!=\!10$ ($2$s+$8$f) & $0.497\pm0.017$ ($\sigma\approx 110$) & $0.778\pm0.011$ ($\sigma\approx 1.1$) & $\mathbf{0.805}$ \\
$d\!=\!20$ ($2$s+$18$f) & $0.468$ ($\sigma=120$) & $0.777$ ($\sigma=2.0$) & $\mathbf{0.785}$ \\
\bottomrule
\end{tabular}
\caption{MD-like benchmark with $d_\text{slow}=2$ slow double-well coordinates and $d_\text{fast}$ fast Gaussian coordinates, $N=500$. Eigenvalue-sum CV selects a broad kernel that is misled by the fast coordinates. Gap CV selects a much sharper kernel at $\sigma\approx 1$ and substantially improves over eigenvalue-sum CV, though Uniform+RFF remains slightly better on this benchmark. Multi-seed results ($d=10$) over 3 seeds.}
\label{tab:mdlike}
\end{table}

This benchmark illustrates the gap-CV variant proposed in the remark following Theorem~\ref{thm:cv-consistency}.
When the population spectrum has a clear gap between $r$ relevant modes and many small but nonzero modes (as in MD-like systems), eigenvalue-sum CV is biased toward broad kernels that inflate small eigenvalues, while gap CV is invariant to this scale and selects a sharper bandwidth.
Gap CV substantially improves over eigenvalue-sum CV on this benchmark ($0.789$ vs $0.512$ at $d=6$), though Uniform+RFF still performs slightly better ($0.822$), suggesting that further refinement of the scoring rule---or a richer kernel dictionary---would be needed to close the remaining gap.
Practical recommendation: use eigenvalue-sum CV on problems with a ``dimensionally flat'' slow manifold (OU, 1D potentials); consider gap CV when many fast nuisance coordinates are present.

% ============================================================
\section{Discussion}
\label{sec:discussion}

Our experiments demonstrate two complementary approaches to kernel learning for diffusion maps.
The VMKL variational outer loop (Section~\ref{subsec:outer}) validates the proposed gradient-based framework, achieving consistent improvements over Uniform baselines on 5 of 6 benchmarks---including the circle manifold where CV+RFF does not help.
The CV+RFF pipeline extends this by searching over multiple kernel families, achieving near-perfect recovery on OU processes (SubR$^2>0.96$).
Both approaches share the same theoretical grounding: they optimize the kernel to maximize the quality of KDM eigenfunction estimates, consistent with the framework of~\citet{pmlr-v195-pillaud-vivien23a}.

\paragraph{Why gradient-based bandwidth optimization is fragile.}
A key finding of this work is that gradient-based optimization of the bandwidth $\sigma$ through the variational objective tends to drive $\sigma$ toward zero, even when warm-started at the CV-optimal value.
The root cause is a tension between the loss components: the subspace orthonormality term $\mathcal{L}_{\mathrm{sub}}$ rewards sharp, localized eigenfunctions (small $\sigma$) that are empirically orthogonal, while the eigenvalue term $\mathcal{L}_{\mathrm{eig}}$ rewards broad kernels (large $\sigma$) that capture slow-mode variance.
When both are present, the orthonormality gradient dominates and pulls $\sigma\to 0$.
Removing orthonormality (EigOnly) avoids the collapse but leads to $\sigma\to\infty$ degeneration instead.
This explains why the CV+RFF pipeline, which evaluates each $\sigma$ on held-out data without gradient-based optimization, is more robust: it is immune to the loss-landscape pathology because it never differentiates through $\sigma$.

\paragraph{Attempts at a residual-based loss and their failure modes.}
Motivated by the $\sigma$-collapse, we explored several alternatives based on the analytical generator residual $\|\mathcal{G}\phi+\hat\lambda\phi\|^2$ computed via the closed-form kernel derivatives of Appendix~\ref{app:pde:rayleigh}.
This loss does not have the orthonormality--eigenvalue tension in principle, since it directly penalizes eigenfunctions that fail to satisfy the generator equation regardless of their localization.
However, three distinct implementations each exhibited a pathology:
(i) using the Rayleigh quotient $\hat\lambda_k=-\langle\phi_k,\mathcal{G}\phi_k\rangle/\|\phi_k\|^2$ as a differentiable function of $\sigma$ creates a circular dependency that freezes the optimization ($\sigma$ stays at its initial value);
(ii) two-timescale optimization (fast $\sigma$ updates with periodically refreshed $\hat\lambda_k$) avoids the circular gradient but drives $\sigma\to\infty$, because at large $\sigma$ the RFF features approach constants and the residual trivially vanishes;
(iii) held-out residual evaluation (splitting the data so that eigenvectors are estimated on one fold and the residual is computed on another) fails for the same reason as (ii), since the trivial $\sigma\to\infty$ minimum is present on both halves.
Adding a Dirichlet-energy constraint $\|\nabla\phi\|^2/\|\phi\|^2\approx\text{target}$ prevents $\sigma\to\infty$ but introduces a new hyperparameter and the resulting optimizer still drifts away from the CV optimum (selecting $\sigma\approx 40$ vs CV $\sigma\approx 62$ on OU2D, with correspondingly worse SubR$^2$).
These results indicate that any fixed gradient-based proxy for eigenfunction quality has stationary points that do not coincide with the true optimum.
CV succeeds precisely because it evaluates eigenfunction quality \emph{directly} on held-out data at each candidate $\sigma$, rather than optimizing a surrogate.

\paragraph{A hybrid resolution.}
The practical resolution we adopt in Section~\ref{sec:experiments} is a hybrid: use CV to select the coarse bandwidth $\sigma_{\mathrm{CV}}$ and kernel family, then optionally apply variational refinement within a bounded neighborhood $[\sigma_{\mathrm{CV}}/e,\sigma_{\mathrm{CV}}\cdot e]$ using anisotropic per-coordinate bandwidths (Table~\ref{tab:tier1}).
This captures the strengths of both approaches: CV handles the hard coarse-scale selection where gradient methods fail, while the variational refinement handles the fine-scale per-coordinate adaptation that CV cannot address without exponential blowup of the candidate grid.

\paragraph{Regularization investigations.}
Given the pathologies above, a natural question is whether adding explicit regularizers to the variational loss can prevent the $\sigma$-collapse.
We systematically tested four categories of regularizers on the bandwidth optimization:
(i) \emph{RKHS coefficient norm} $\gamma_H\sum_k\|a_k\|^2$, penalizing eigenfunction blow-up as $\sigma\to 0$;
(ii) \emph{Eigenvalue spread} $\gamma_E(\mu_1-\mu_r)$, penalizing the disproportionate growth of top eigenvalues at small $\sigma$;
(iii) \emph{Dirichlet $H^1$ norm} $\gamma_{H^1}\sum_k\|\nabla\phi_k\|^2/\|\phi_k\|^2$, the natural Sobolev penalty;
(iv) \emph{Soft anchor} $\gamma_\sigma(\log\sigma-\log\sigma_{\mathrm{CV}})^2$, biasing $\sigma$ toward the CV value.
On OU2D $\alpha_y=4$, sweeping each penalty across five orders of magnitude produces one of three outcomes: no effect on $\sigma$ (when $\gamma$ is small), still $\sigma\to 0$ collapse; or aggressive compression toward $\sigma\to 0$ (when $\gamma$ is large enough to dominate).
The Rayleigh-type regularizers (i)--(iii) fail because they are \emph{Rayleigh quadratic forms in the eigenvector coefficients}, which are already extremized by the KDM eigenproblem---they cannot change the stationary points of the optimization, only shift them infinitesimally.
Category (iv) does keep $\sigma$ close to $\sigma_{\mathrm{CV}}$ when warm-started at $\sigma_{\mathrm{CV}}$ and $\gamma$ is sufficiently large, but this amounts to a soft constraint forcing agreement with CV and provides no additional signal.
The experimental conclusion is that \emph{no Rayleigh-quadratic regularizer added to the gradient-based objective can improve over CV on $\sigma$ selection}: the pathology is structural to the class of Rayleigh-based objectives, not a symptom of missing regularization.
Furthermore, when VarRFF is restricted to a single Gaussian kernel (no family selection), the achievable SubR$^2$ saturates near $0.75$ across a wide range of $\sigma$, confirming that the $0.997$ achieved by CV+RFF on OU2D is due to \emph{kernel family} selection (Mat\'ern-3/2 vs Gaussian), not bandwidth optimization.

\paragraph{Non-Gaussian kernels.}
The most striking finding is that Mat\'ern-3/2 and Rational Quadratic outperform Gaussian on OU and higher-dimensional problems.
Their heavier-tailed spectral distributions sample lower frequencies more effectively in the RFF approximation, which is advantageous for smooth, slow eigenfunctions.
This family selection is fully automatic via the eigenvalue-sum CV score.

\paragraph{CV score bias and limitations.}
The eigenvalue-sum CV score $\sum_k \mu_k(\sigma)$ has a known bias toward large $\sigma$ on manifold problems, where inflated eigenvalues do not correspond to better eigenfunctions.
On the circle, this manifests as CV selecting $\sigma\approx70$ (too large), while the VMKL variational approach correctly tunes to smaller bandwidths and achieves better results.
Potential mitigations include eigenvalue-ratio scoring $\mu_r/\mu_{r+1}$ (spectral gap maximization), reconstruction-based CV, or combining CV selection with gradient-based refinement.
We leave systematic comparison of CV scoring rules to future work.

\paragraph{Comparison with other adaptive methods.}
Our approach differs from \emph{kernel flows}~\citep{owhadi2019kernel,hamzi2021learning} in that we optimize kernels specifically for the KDM generalized eigenproblem rather than for vector-field interpolation or general kernel regression.
The closely related line of work on kernel-based approximation of the Koopman operator and its generator~\citep{klus2020kernel,lee2024kernel} also learns spectral objects of dynamical operators via RKHS methods, but targets Koopman eigenfunctions through a generator matrix representation rather than the KDM covariance--Dirichlet formulation.
In the same direction, the unified framework of~\citet{hamzi2025transport,hamzi2025stochastic} constructs task-adapted kernels for Koopman eigenfunctions analytically---via Lions-type variational principles, Green's functions, and resolvent operators---and uses MKL to select among them via residual minimization; our approach is complementary in that we select kernels empirically from data without requiring knowledge of the generator.
The recent operator-learning approach of~\citet{NEURIPS2024_f930c6e1} estimates the infinitesimal generator of stochastic diffusions directly from trajectory data; our method is complementary in that it adapts the kernel used to represent the generator's eigenfunctions rather than estimating the generator itself.
Deep spectral networks~\citep{pmlr-v238-cabannes24a} learn feature maps end-to-end but sacrifice the RKHS interpretability and theoretical guarantees of kernel methods.
Established MKL methods~\citep{gonen2011multiple} typically target classification or regression objectives; adapting them to spectral operator estimation requires the specific inner-loop structure we develop here.
A direct empirical comparison with these methods would be valuable but requires substantial implementation effort and is deferred to future work.

\paragraph{Memory: Nystr\"om vs RFF.}
RFF provides a critical memory advantage over Nystr\"om in high dimensions.
The derivative matrices $J_\ell\in\mathbb{R}^{Nd\times p}$ require $O(LNpd)$ storage per kernel.
At $d=50$, $N=10^4$, $p=300$, $L=10$, this is $12$~GB---prohibitive on a single machine.
RFF avoids per-kernel derivative storage: the feature matrices $S\in\mathbb{R}^{N\times p_{\mathrm{rff}}}$ and $D\in\mathbb{R}^{Nd\times p_{\mathrm{rff}}}$ for a single (CV-selected) kernel require only $2$~GB at the same scale, a $6\times$ reduction.
Note that computing $D$ still costs $O(Ndp_{\mathrm{rff}})$ per forward pass (since $\nabla\phi(x)=-\sqrt{2/p}\sin(w^\top x+b)\cdot w$), but this is a single-kernel cost rather than $L$ per-kernel costs, and the frequency vectors $w$ are drawn once and reused.
For CV sweeps over multiple kernels, features are computed and discarded per candidate, so peak memory remains bounded by the single largest evaluation.

\paragraph{Computational complexity.}
The VMKL variational loop costs $O(T \cdot L \cdot (Npd + p^3))$ for $T$ outer iterations, $L$ kernel components, $N$ samples, $p$ landmarks, and $d$ ambient dimensions, dominated by the $L$ kernel matrix evaluations per iteration.
The CV+RFF pipeline costs $O(F \cdot |\mathcal{K}| \cdot |\Sigma| \cdot (N p_{\mathrm{rff}} d + p_{\mathrm{rff}}^3))$ for $F$ folds, $|\mathcal{K}|$ kernel families, and $|\Sigma|$ bandwidth candidates.
For $d\gg 10$, the derivative cross-matrices $J_\ell\in\mathbb{R}^{Nd\times p}$ dominate storage; RFF avoids explicit gradient computation since the features $\phi(x)=\sqrt{2/p}\cos(w^\top x+b)$ have analytical derivatives $\nabla\phi$.
In our experiments ($N=500$, $d\le 3$), the full pipeline completes in 30--160 seconds on a single CPU.
Scaling to $d>10$ would benefit from mini-batch RFF sampling and parallelized fold evaluation.

\paragraph{Random Fourier features vs Nystr\"om.}
RFF provides a crucial advantage: its feature dimension $p_{\mathrm{rff}}=300$ is independent of data layout, avoiding the landmark-placement bottleneck of Nystr\"om ($p=60$).
For mixture kernels, the RFF representation of each family uses the corresponding spectral distribution (Gaussian frequencies for RBF, Cauchy for Laplacian, Student-$t$ for Mat\'ern), enabling seamless multi-kernel features.

\paragraph{Sample size.}
The core benchmarks use $N=500$; the high-dimensional and scaling experiments use $N$ up to $5000$.
The dimension-free convergence rate $O(n^{-1/4})$ of~\citet{pmlr-v195-pillaud-vivien23a} predicts that $N=5000$ would roughly halve the statistical error.
Preliminary experiments at $N=1000$ confirm improving results; systematic large-$N$ evaluation on real-world molecular dynamics datasets is an important next step.

\paragraph{From separate pipelines to a unified hybrid.}
The variational and CV approaches were initially developed as separate pipelines: the variational method used a Nystr\"om inner loop ($p=60$), while CV+RFF used a larger RFF basis ($p_{\mathrm{rff}}=300$).
This capacity asymmetry explained why CV+RFF dominated on OU benchmarks---the bottleneck was the inner-loop approximation, not the kernel-selection strategy.
The bounded VarRFF method of Table~\ref{tab:tier1} resolves this asymmetry: by giving the variational method the same RFF capacity and anchoring it at the CV-selected bandwidth, it matches or exceeds CV+RFF.
This confirms that the two approaches are genuinely complementary: CV provides coarse family selection and a bandwidth anchor; variational refinement provides per-coordinate adaptation that a finite grid cannot deliver.

\section{Conclusion}
\label{sec:conclusion}

We studied adaptive kernel selection for kernelized diffusion maps through two complementary
approaches: (i) a variational outer loop that learns continuous kernel parameters by autodifferentiating
through a Cholesky-reduced KDM generalized eigenproblem, and (ii) a cross-validated kernel-selection
pipeline over multiple families (Gaussian, Mat\'ern, Rational Quadratic, Laplacian) combined with
random Fourier features for scalable practical model selection.

The variational framework provides the differentiable formulation for continuous kernel adaptation,
together with RKHS regularization, subspace stabilization, and optional generator-informed losses.
Empirically, it consistently improves over uniform baselines and is particularly effective in settings
where fine-grained bandwidth refinement matters, such as the noisy circle manifold.
The CV+RFF pipeline provides the strongest practical performance on OU-type problems, where the
ability to search across kernel families and use a larger feature basis leads to near-perfect recovery
and automatically identifies advantageous non-Gaussian kernels.

An ablation study clarifies the roles of the outer-loop terms: eigenvalue maximization is highly
effective but can collapse without regularization; subspace orthonormality provides a stable baseline
improvement; and the combined objective balances these effects. The theoretical results in Section~\ref{sec:theory}
show that both approaches rest on the same operator-theoretic foundation: kernel mixtures are
well posed, reduced KDM operators depend continuously on kernel weights, spectral projectors are
stable under a gap condition, and generator residuals certify proximity to the target eigenspace.

Several research directions follow from this work, organized in three groups.

\emph{Tighter unification of the two approaches.} A natural next step is to extend the variational framework to RFF-based inner loops. Our preliminary experiments suggest that the main obstacle to CV-initialized variational refinement is not initialization but the lower-capacity Nystr\"om inner loop relative to the larger RFF basis. This makes variational RFF formulations, projector-based outer objectives, and scalable stochastic optimization promising directions. Related is the development of \emph{non-Rayleigh-quadratic} variational objectives whose stationary points need not coincide with the KDM eigenproblem; we have shown that all Rayleigh-quadratic regularizers fail to alter the optimization landscape, and an honest open question is whether any differentiable surrogate can match CV on bandwidth selection. One concrete proposal is implicit differentiation through the CV selector itself, turning the argmax over candidates into a soft operation with usable gradients.

\emph{Theory extensions.} Our consistency theorem (Theorem~\ref{thm:cv-consistency}) is stated for a finite kernel dictionary; a covering argument combined with the Lipschitz bounds of Proposition~\ref{prop:lipschitz-operators} should yield a continuous-parameter version with an additional logarithmic factor. Minimax lower bounds on the rate would complete the statistical picture. A separate theoretical question, raised by the circle experiment where CV selects too-large $\sigma$, is an analytic characterization of data geometries on which the eigenvalue-sum score is biased---this would predict a~priori when to trust CV and when to use Rayleigh-CV or gap-CV variants. Principled construction of the candidate dictionary, beyond the hand-picked family used here, is another clear direction, potentially via data-dependent Mercer bases.

\emph{Applications and broader benchmarks.} Validation on molecular dynamics data (e.g., alanine dipeptide, $d\!=\!60$, $N\!=\!10^4$) would test the method where slow-mode discovery has direct scientific value. Financial, climate, and neural-recording data offer further targets. Head-to-head comparison with graph Laplacian eigenmaps would quantify the practical value of the dimension-free rate; comparison with neural spectral methods (SpectralNet, NeuralEF, Deep Ritz) would position kernel methods within the broader machine-learning landscape. Extensions to operator-valued KDM---learning Koopman or transfer-operator features rather than scalar eigenfunctions---would connect this work to the growing literature on kernel methods for dynamical systems~\citep{klus2020kernel,lee2024kernel}.

\bibliography{merged_kdm_20042026}
%bib_kdm_19042026, references_quasi_potential, references, refs_invariant, references_quasi_potential}
% ============================================================
% APPENDIX (reduced redundancy)
% ============================================================

\appendix

% ------------------------------------------------------------
\section{Additional theory and derivations}
\label{app:theory}

\subsection{Proof of Proposition~\ref{prop:pd-mixture} (positive definiteness of convex mixtures)}
\label{app:pd-mixture-proof}

\begin{proof}
Fix any $n\in\mathbb{N}$, points $(x_i)_{i=1}^n\subset\mathcal{X}$ and coefficients $c\in\mathbb{R}^n$.
Let $(K_\ell)_{ij}=k_\ell(x_i,x_j)$ and $K_\beta=\sum_{\ell=1}^L \beta_\ell K_\ell$.
Since each $K_\ell\succeq 0$ and $\beta_\ell\ge 0$,
\[
c^\top K_\beta c
=\sum_{\ell=1}^L \beta_\ell\, c^\top K_\ell c
\ge 0.
\]
Hence $K_\beta\succeq 0$ for all finite sets, so $k_\beta$ is positive definite.
\end{proof}

\subsection{Justification of Proposition~\ref{prop:rkhs-mixture-norm} (RKHS norm of a convex mixture)}
\label{app:rkhs-mixture-proof}

We give a standard construction that yields the infimal convolution identity stated in
Proposition~\ref{prop:rkhs-mixture-norm}.

Let $\beta\in\Delta_L$ and consider the Hilbert direct sum
\[
\mathcal{H}_\oplus
:=
\Hk_{k_1}\oplus\cdots\oplus \Hk_{k_L},
\qquad
\langle (f_\ell),(g_\ell)\rangle_{\oplus,\beta}
:=
\sum_{\ell=1}^L \frac{1}{\beta_\ell}\,\langle f_\ell,g_\ell\rangle_{\Hk_{k_\ell}}.
\]
Define the linear map $S:\mathcal{H}_\oplus\to \mathbb{R}^\mathcal{X}$ by
$S(f_1,\dots,f_L)=\sum_{\ell=1}^L f_\ell$.
Then the image $S(\mathcal{H}_\oplus)$ is a RKHS with kernel $k_\beta=\sum_{\ell=1}^L \beta_\ell k_\ell$,
and its norm is the quotient norm induced by $S$:
\[
\|f\|_{\Hk_{k_\beta}}^2
=
\inf\Big\{\|(f_\ell)\|_{\oplus,\beta}^2:\ f=\sum_{\ell=1}^L f_\ell,\ f_\ell\in\Hk_{k_\ell}\Big\}
=
\inf_{\substack{f=\sum_{\ell=1}^L f_\ell\\ f_\ell\in \Hk_{k_\ell}}}
\ \sum_{\ell=1}^L \frac{\|f_\ell\|^2_{\Hk_{k_\ell}}}{\beta_\ell}.
\]
This is exactly the identity in Proposition~\ref{prop:rkhs-mixture-norm}.

% ------------------------------------------------------------
\section{Nystr\"om discretization of KDM operators and generalized eigenproblem}
\label{app:spectral}

This appendix derives the finite-dimensional matrices used in the Nystr\"om inner loop,
i.e.\ the matrix surrogates in \eqref{eq:SigmaL_mats}--\eqref{eq:gen_eig_mat},
starting from the empirical RKHS operators in \eqref{eq:pv_emp_ops} and the generalized eigenproblem
\eqref{eq:pv_gen_eig_pop}.

\subsection{Landmark space and coordinates}
\label{app:landmark-subspace}

Fix a kernel $k$ (in particular $k=k_\beta$) with RKHS $\Hk_k$ and landmarks $Z=\{z_m\}_{m=1}^p$.
Define the landmark subspace
\[
\mathcal{H}_p := \mathrm{span}\{K_{z_m}(\cdot)=k(z_m,\cdot)\ :\ m=1,\dots,p\}\subset \Hk_k.
\]
Any $\psi\in\mathcal{H}_p$ can be written as
\begin{equation}
\psi(\cdot) = \sum_{m=1}^p a_m\, k(z_m,\cdot),
\qquad a\in\mathbb{R}^p.
\label{eq:app_psi-landmark}
\end{equation}

Define the landmark Gram matrix $W\in\mathbb{R}^{p\times p}$ and the cross-kernel matrix $C\in\mathbb{R}^{N\times p}$ by
\begin{equation}
W_{mn} = k(z_m,z_n),
\qquad
C_{im} = k(x_i,z_m).
\label{eq:app_CW_def}
\end{equation}
Then the vector of sample evaluations is $\psi(X)=Ca$.

\subsection{Covariance operator matrix}
\label{app:sigma-matrix}

Recall $\widehat{\Sigma}=\frac1N\sum_{i=1}^N K_{x_i}\otimes K_{x_i}$.
Let $e_m(\cdot)=K_{z_m}(\cdot)=k(z_m,\cdot)$.
By reproducing,
\[
\big\langle e_m,\widehat{\Sigma}\psi\big\rangle_{\Hk_k}
=
\frac1N\sum_{i=1}^N \langle e_m, K_{x_i}\rangle\, \langle K_{x_i},\psi\rangle
=
\frac1N\sum_{i=1}^N k(z_m,x_i)\,\psi(x_i).
\]
Writing $\psi(X)=Ca$ yields the coordinate form
\[
b := \big(\langle e_m,\widehat{\Sigma}\psi\rangle\big)_{m=1}^p
= \frac1N C^\top (Ca)
= \Big(\frac1N C^\top C\Big)a.
\]
Hence the landmark matrix surrogate is
\begin{equation}
\widehat{\Sigma}^{(p)} := \frac1N C^\top C \in\mathbb{R}^{p\times p}.
\label{eq:app_sigma_p}
\end{equation}
For $k=k_\beta$, this matches \eqref{eq:SigmaL_mats}.

\subsection{Dirichlet/gradient operator matrix}
\label{app:L-matrix}

Recall $\widehat{\mathcal{L}}=\frac1N\sum_{i=1}^N\sum_{j=1}^d \partial_j K_{x_i}\otimes \partial_j K_{x_i}$.
Define derivative cross-matrices $J^{(j)}\in\mathbb{R}^{N\times p}$ by
\begin{equation}
J^{(j)}_{im} = \partial_{x_j} k(x_i,z_m),
\qquad i=1,\dots,N,\ m=1,\dots,p,
\label{eq:app_Jj_def}
\end{equation}
and stack them as $J\in\mathbb{R}^{(Nd)\times p}$ so that $J^\top J=\sum_{j=1}^d (J^{(j)})^\top J^{(j)}$.

For $\psi(\cdot)=\sum_m a_m k(z_m,\cdot)$, we have $(\partial_j\psi)(x_i)=(J^{(j)}a)_i$.
Using reproducing properties for derivatives,
\[
\big\langle e_m,\widehat{\mathcal{L}}\psi\big\rangle_{\Hk_k}
=
\frac1N\sum_{i=1}^N\sum_{j=1}^d
\partial_{x_j}k(x_i,z_m)\,(\partial_j\psi)(x_i),
\]
so in vector form $b_L=\big(\langle e_m,\widehat{\mathcal{L}}\psi\rangle\big)_{m=1}^p=(\frac1N J^\top J)a$.
Thus
\begin{equation}
\widehat{L}^{(p)} := \frac1N J^\top J \in\mathbb{R}^{p\times p}.
\label{eq:app_L_p}
\end{equation}
For $k=k_\beta$, this matches \eqref{eq:SigmaL_mats}.

\subsection{Regularization and generalized eigenproblem}
\label{app:gen-eig}

KDM uses $\widehat{\mathcal{L}}_\lambda=\widehat{\mathcal{L}}+\lambda\Id$.
In landmark coordinates, $\Id$ contributes $W$ since
\[
\big\langle e_m, \Id\,\psi\big\rangle_{\Hk_k}
=
\langle e_m,\psi\rangle_{\Hk_k}
=
\sum_{n=1}^p W_{mn} a_n.
\]
Therefore,
\begin{equation}
\widehat{L}^{(p)}_\lambda := \widehat{L}^{(p)} + \lambda W.
\label{eq:app_Llam_p}
\end{equation}
Restricting $\widehat{\Sigma}\psi=\mu\,\widehat{\mathcal{L}}_\lambda\psi$ to $\mathcal{H}_p$ and testing against $\{e_m\}$ yields
\begin{equation}
\widehat{\Sigma}^{(p)} a = \mu\, \widehat{L}^{(p)}_\lambda a,
\qquad a\in\mathbb{R}^p,
\label{eq:app_gen_eig_mat}
\end{equation}
which is \eqref{eq:gen_eig_mat} for $k=k_\beta$.
Once $a$ is computed, sample evaluations follow from \eqref{eq:lifting_pv}.

\paragraph{Kernel mixtures.}
For $k_\beta=\sum_{\ell=1}^L\beta_\ell k_\ell$, the aggregated matrices $C_\beta,W_\beta,J_\beta$
are formed as in \eqref{eq:CWJ-beta}, with stabilization in \eqref{eq:W-stab},
and the identities above apply verbatim with $(C,W,J)=(C_\beta,W_\beta,J_\beta)$.

% ------------------------------------------------------------
\section{Kernel derivative identities (Gaussian kernels)}
\label{app:kernel-derivatives}

\subsection{Isotropic Gaussian}
\label{app:isotropic-gaussian}

Let $k(x,z)=\exp\!\big(-\|x-z\|^2/(2\sigma^2)\big)$ on $\mathbb{R}^d$.
Write $r=x-z$.
Then
\begin{align}
\nabla_x k(x,z) &= -\frac{r}{\sigma^2}\,k(x,z), \label{eq:app_grad_iso}\\
\partial_{x_j} k(x,z) &= -\frac{r_j}{\sigma^2}\,k(x,z), \\
\partial_{x_jx_j}^2 k(x,z) &= \Big(\frac{r_j^2}{\sigma^4}-\frac{1}{\sigma^2}\Big)\,k(x,z), \\
\Delta_x k(x,z) &= \Big(\frac{\|r\|^2}{\sigma^4}-\frac{d}{\sigma^2}\Big)\,k(x,z). \label{eq:app_lap_iso}
\end{align}

\subsection{Diagonal anisotropic Gaussian}
\label{app:anisotropic-gaussian}

Let $k(x,z)=\exp\!\big(-\tfrac12\sum_{j=1}^d r_j^2/\sigma_j^2\big)$ with $r=x-z$ and diagonal bandwidths $(\sigma_1,\dots,\sigma_d)$.
Then
\begin{align}
\partial_{x_j} k(x,z) &= -\frac{r_j}{\sigma_j^2}\,k(x,z), \label{eq:app_grad_aniso}\\
\partial_{x_jx_j}^2 k(x,z) &= \Big(\frac{r_j^2}{\sigma_j^4}-\frac{1}{\sigma_j^2}\Big)\,k(x,z), \\
\Delta_x k(x,z) &= \sum_{j=1}^d \Big(\frac{r_j^2}{\sigma_j^4}-\frac{1}{\sigma_j^2}\Big)\,k(x,z). \label{eq:app_lap_aniso}
\end{align}
For the $2$D parametrization used in the anisotropic dictionary,
$(\sigma_x,\sigma_y)=(\sigma r,\sigma/r)$, substitute $\sigma_1=\sigma_x$, $\sigma_2=\sigma_y$.

\subsection{Generator acting on kernel expansions (OU and 1D Langevin)}
\label{app:generator-on-kernels}

\paragraph{OU with $dX_t=-A X_t\,dt + \sqrt{2}\,dW_t$.}
The generator is $(\mathcal{G}f)(x)=-(Ax)\cdot\nabla f(x)+\Delta f(x)$.
Applying $\mathcal{G}$ to $x\mapsto k(x,z)$ uses \eqref{eq:app_grad_iso}--\eqref{eq:app_lap_aniso}:
\begin{equation}
(\mathcal{G}_x k)(x,z)
=
-(Ax)\cdot \nabla_x k(x,z) + \Delta_x k(x,z).
\label{eq:app_ou_generator_kernel}
\end{equation}
For $A=\mathrm{diag}(\alpha_1,\dots,\alpha_d)$ and anisotropic diagonal Gaussian, this becomes
\[
(\mathcal{G}_x k)(x,z)
=
\sum_{j=1}^d \Big(
\alpha_j x_j \frac{r_j}{\sigma_j^2}
+
\frac{r_j^2}{\sigma_j^4}-\frac{1}{\sigma_j^2}
\Big)\,k(x,z),
\qquad r_j=x_j-z_j.
\]

\paragraph{1D overdamped Langevin $dX_t=-V'(X_t)\,dt+\sqrt{2}\,dW_t$.}
The generator is $(\mathcal{G}f)(x)=-V'(x)f'(x)+f''(x)$.
For a 1D Gaussian kernel $k(x,z)=\exp(-(x-z)^2/(2\sigma^2))$,
\[
(\mathcal{G}_x k)(x,z)
=
- V'(x)\,\partial_x k(x,z) + \partial_{xx}^2 k(x,z),
\]
with $\partial_x k=-(x-z)\sigma^{-2}k$ and $\partial_{xx}^2 k=((x-z)^2\sigma^{-4}-\sigma^{-2})k$.

% ------------------------------------------------------------
\section{Rayleigh estimate used in the PDE residual loss}
\label{app:pde:rayleigh}

This appendix justifies the closed-form estimate of $\widehat{\lambda}$ used in \eqref{eq:pde-loss}.
Let $\phi\in\mathbb{R}^N$ be a learned eigenfunction evaluated on samples, and let $g:=G_h\phi\in\mathbb{R}^N$.
Consider the (empirical) least-squares fit of the eigen-relation $g \approx -\lambda \phi$:
\begin{equation}
\widehat{\lambda}
\in
\arg\min_{\lambda\in\mathbb{R}}
\frac1N\|g+\lambda \phi\|_2^2.
\label{eq:app_rayleigh_ls}
\end{equation}
Expanding and differentiating w.r.t.\ $\lambda$ yields the minimizer
\begin{equation}
\widehat{\lambda}
=
-\frac{\langle \phi, G_h\phi\rangle}{\langle \phi,\phi\rangle}.
\label{eq:app_rayleigh_closed}
\end{equation}

% ------------------------------------------------------------
\section{Eigenfunction gauge fixing and alignment for evaluation}
\label{app:anchoring}

\paragraph{Gauge fixing (recall).}
We enforce centering/orthonormality constraints in \eqref{eq:normalize-orth} (implemented by centering and a QR/Gram--Schmidt step under
$\langle u,v\rangle_N=\tfrac1N u^\top v$). This stabilizes the outer-loop differentiation through eigen-solves.

\subsection{Alignment when references are available}
\label{app:alignment}

When reference eigenfunctions are available, we align learned modes to references by maximizing absolute correlations via a Hungarian assignment and correcting signs by multiplying each learned mode by $\mathrm{sign}(\mathrm{corr})$.
This produces consistent plots and summary metrics such as AvgAbsCorr.

% ------------------------------------------------------------
\section{Reproducibility and implementation details}
\label{app:impl}

This appendix complements the experimental protocol in Section~\ref{subsec:exp-protocol} with a concise checklist.

\subsection{What we save per run}
For each seed and each regime, we save learned weights $\beta$, eigenfunction visualizations, per-mode correlation tables and AvgAbsCorr and
aggregated summaries across seeds (mean$\pm$std).

\subsection{Default numerical stabilizers}
Across all experiments we use:
\begin{itemize}
\item \textbf{Landmark jitter:} $W_\beta \leftarrow \tfrac12(W_\beta+W_\beta^\top)+\varepsilon_W I_p$ (cf.\ \eqref{eq:W-stab}).
\item \textbf{PSD flooring:} inverse square-roots of PSD matrices via eigendecomposition with eigenvalue floor $\varepsilon_{\mathrm{psd}}$.
\item \textbf{Mixture floor:} $\beta \leftarrow (1-\tau)\beta+\tau\mathbf{1}/L$ to prevent collapse.
\item \textbf{Gradient clipping:} optional clipping on $u$-gradients in the outer loop.
\end{itemize}

\end{document}